\tikzset{mark options={mark size=2, line width=1pt}}
\pgfplotsset{ticks=none}
\pgfplotsset{try min ticks=3}
\pgfplotsset{max space between ticks=50}
\pgfplotsset{colormap/hot}
\def\@setOxy O(#1,#2,#3)x(#4,#5,#6)y(#7,#8,#9)%
\def\tikz@plane@origin{\pgfpointxyz{#1}{#2}{#3}}%
   \def\tikz@plane@x{\pgfpointxyz{#4}{#5}{#6}}%
   \def\tikz@plane@y{\pgfpointxyz{#7}{#8}{#9}}%
\newtheorem{theorem}{Theorem}[section]
\newtheorem{corollary}[theorem]{Corollary}
\newtheorem{lemma}[theorem]{Lemma}
\newtheorem{proposition}[theorem]{Proposition}
\newtheorem{example}[theorem]{Example}
\newtheorem{definition}[theorem]{Definition}
\newcommand{\R}{\mathbb{R}}
\newcommand{\N}{\mathbb{N}}
\renewcommand{\L}{\mathcal{L}}
\newcommand{\cN}{\mathcal{N}}
\newcommand{\cP}{\mathcal{P}}
\newcommand{\cR}{\mathcal{R}}
\newcommand{\cB}{\mathcal{B}}
\newcommand{\cA}{\mathcal{A}_L}
\newcommand{\W}{W^{1,\infty}}
\newcommand{\oneto}[1]{[{#1}]}
\newcommand{\esssup}[1]{\operatorname*{ess\ sup}\limits_{#1}}
\newcommand{\eps}{\varepsilon}
\newcommand{\<}{\langle}
\renewcommand{\>}{\rangle}
\providecommand{\1}{{\ensuremath{\mathbf{1}}}}
\newcommand{\m}{\oneto{m}}
\DeclareMathOperator{\sgn}{sgn}
\newcommand{\ReLU}{\rho}
\newcommand{\notimplies}{%
\mathrel{{\ooalign{\hidewidth$\not\phantom{=}$\hidewidth\cr$\implies$}}}}
\DeclareMathOperator*{\infp}{\text{inf\vphantom{p}}}
\DeclareMathOperator*{\supp}{\text{sup}}
\title{How degenerate is the parametrization of neural networks with the {ReLU} activation function?}
\author{
   Julius Berner \\
   Faculty of Mathematics, University of Vienna \\
   Oskar-Morgenstern-Platz 1, 1090 Vienna, Austria\\
   \texttt{julius.berner@univie.ac.at}
   \And
   Dennis Elbr\"achter \\
   Faculty of Mathematics, University of Vienna \\
   Oskar-Morgenstern-Platz 1, 1090 Vienna, Austria\\
   \texttt{dennis.elbraechter@univie.ac.at} \\
   \AND
   Philipp Grohs \\
Faculty of Mathematics and Research Platform DataScience@UniVienna,
University of Vienna \\ 
Oskar-Morgenstern-Platz 1, 1090 Vienna, Austria\\
   \texttt{philipp.grohs@univie.ac.at} \\
}
\begin{document}

\maketitle

\begin{abstract}
Neural network training is usually accomplished by solving a non-convex optimization problem using stochastic gradient descent. Although one optimizes over the networks parameters, the main loss function generally only depends on the realization of the neural network, i.e.\@ the function it computes. Studying the optimization problem over the space of realizations opens up new ways to understand neural network training. In particular, usual loss functions like mean squared error and categorical cross entropy are convex on spaces of neural network realizations, which themselves are non-convex. Approximation capabilities of neural networks can be used to deal with the latter non-convexity, which allows us to establish that for sufficiently large networks local minima of a regularized optimization problem on the realization space are almost optimal. Note, however, that each realization has many different, possibly degenerate, parametrizations. In particular, a local minimum in the parametrization space needs not correspond to a local minimum in the realization space. To establish such a connection, inverse stability of the realization map is required, meaning that proximity of realizations must imply proximity of corresponding parametrizations. We present pathologies which prevent inverse stability in general, and, for shallow networks, proceed to establish a restricted space of parametrizations on which we have inverse stability w.r.t.\@ to a Sobolev norm. Furthermore, we show that by optimizing over such restricted sets, it is still possible to learn any function which can be learned by optimization over unrestricted sets.
\end{abstract}

\section{Introduction and Motivation}\label{sec:Sec1}
In recent years much effort has been invested into explaining and understanding the overwhelming success of deep learning based methods. On the theoretical side, impressive approximation capabilities of neural networks have been established~\cite{bolcskei2017optimal,Burger2001235,Funahashi1989183,guhring2019error,perekrestenko2018universal,Petersen2017,ShaCC2015provableAppDNN,yarotsky2017error}. No less important are recent results on the generalization of neural networks, which deal with the question of how well networks, trained on limited samples, perform on unseen data~\cite{anthony2009,arora2018stronger,BartlettFT17,Bartlett2017Nearly-tightNetworks,Berner2018AnalysisEquations,Golowich17,neyshabur2017exploring}. Last but not least, the optimization error, which quantifies how well a neural network can be trained by applying stochastic gradient descent to an optimization problem, has been analyzed in different scenarios~\cite{Allen-Zhu2018AOver-Parameterization,choromanska2015loss,Du2018GradientNetworks,kawaguchi2016deep,li2018learning,li2017convergence,mei2018mean,Shamir2013}. While there are many interesting approaches to the latter question, they tend to require very strong assumptions (e.g.\@ (almost) linearity, convexity, or extreme over-parametrization). Thus a satisfying explanation for the success of stochastic gradient descent for a non-smooth, non-convex problem remains elusive.\\
In the present paper we intend to pave the way for a functional perspective on the optimization problem. This allows for new mathematical approaches towards understanding the training of neural networks, some of which are demonstrated in Section~\ref{sec:opt}. To this end we examine degenerate parametrizations with undesirable properties in Section~\ref{sec:degen}. These can be roughly classified as 
\begin{enumerate}[label=C.\arabic*]
    \item unbalanced magnitudes of the parameters
    \item weight vectors with the same direction
    \item weight vectors with directly opposite directions.
\end{enumerate}
Under conditions designed to avoid these degeneracies, Theorem~\ref{thm:main} establishes inverse stability for shallow networks with ReLU activation function. This is accomplished by a refined analysis of the behavior of ReLU networks near a discontinuity of their derivative.
Proposition~\ref{prop:local} shows how inverse stability connects the loss surface of the parametrized minimization problem to the loss surface of the realization space problem.
In Theorem~\ref{thm:optPara} we showcase a novel result on almost optimality of local minima of the parametrized problem obtained by analyzing the realization space problem.
Note that this approach of analyzing the loss surface is conceptually different from previous approaches as in~\cite{choromanska2015loss,goodfellow2014qualitatively,li2018visualizing,Nguyen17loss,Pennington2017loss,safran2016quality}.

\subsection{Inverse Stability of Neural Networks}
We will focus on neural networks with the ReLU activation function $\rho(x):=x_+$, and adapt the mathematically convenient notation from~\cite{Petersen2017}, which distinguishes between the \emph{parametrization} of a neural network and its \emph{realization}.
Let us define the set $\cA$ of all network \emph{architectures} with depth $L\in\N$, input dimension $d\in\N$, and output dimension $D\in\N$ by
\begin{equation}
    \cA:= \{ (N_0,\dots,N_L)\in\N^{L+1} \colon N_0=d, N_L=D \}.
\end{equation}
The architecture $N\in\cA$ simply specifies the number of neurons $N_l$ in each of the $L$ layers. We can then define the space $\cP_N$ of \emph{parametrizations} with architecture $N\in\cA$ as
\begin{align} \label{eq:nn_set}
    \cP_N:=\prod_{\ell=1}^L \left( \R^{N_\ell\times N_{\ell-1}}\times\R^{N_\ell} \right), 
\end{align}
the set $\cP=\bigcup_{N\in\cA} \cP_N$ of all parametrizations with architecture in $\cA$, and the \emph{realization} map  
\begin{align}\begin{split} \label{eq:real_map}
    \cR\colon\cP&\to C(\R^{d},\R^{D}) \\ 
    \Theta=((A_\ell,b_\ell))_{\ell=1}^L&\mapsto\cR(\Theta):=W_L\circ \rho\circ W_{L-1}\dots\rho\circ W_1,
\end{split}\end{align}
where $W_\ell (x):=A_\ell x + b_\ell$ 
and $\rho$ is applied component-wise.
We refer to $A_\ell$ and $b_\ell$ as the weights and biases in the $\ell$-th layer.\\
Note that a parametrization $\Theta\in\Omega\subseteq\cP$ uniquely induces a realization $\cR(\Theta)$ in the realization space $\cR(\Omega)$, while in general there can be multiple non-trivially different parametrizations with the same realization. To put it in mathematical terms, the realization map is not injective.
Consider the basic counterexample 
\begin{equation}
    \Theta=\big( (A_1,b_1),\dots, (A_{L-1},b_{L-1}), (0,0) \big) \quad \text{and} \quad \Gamma=\big( (B_1,c_1),\dots, (B_{L-1},c_{L-1}), (0,0) \big)
\end{equation}
from~\cite{Petersen2018TopologicalSize} where regardless of $A_\ell,B_\ell,b_\ell$ and $c_\ell$ 
both realizations coincide with 
$\cR(\Theta)=\cR(\Gamma)=0$.
However, it it is well-known that the realization map is locally Lipschitz continuous, meaning that close\footnote{On the finite dimensional vector space $\cP_N$ all norms are equivalent and we take w.l.o.g.\@ the maximum norm $\|\Theta\|_\infty$, i.e.\@ the maximum of the absolute values of the entries of the $A_\ell$ and $b_\ell$.}
parametrizations in $\cP_N$ induce realizations which are close in the uniform norm on compact sets, see e.g.\@~\cite[Lemma 14.6]{anthony2009},~\cite[Theorem 4.2]{Berner2018AnalysisEquations}, and~\cite[Proposition 5.1]{Petersen2018TopologicalSize}.\\
We will shed light upon the inverse question. Given realizations $\cR(\Gamma)$ and $\cR(\Theta)$ that are close, do the parametrizations $\Gamma$ and $\Theta$ have to be close? 
In an abstract setting we measure the proximity of realizations in the norm $\|\cdot\|$ of a Banach space $\cB$ with $\cR(\cP)\subseteq \cB$, while concrete Banach spaces of interest will be specified later.
In view of the above 
counterexample we will, at the very least, need to allow for the reparametrization of one of the networks, i.e.\@ we arrive at the following question.
\begin{quote}
Given $\cR(\Gamma)$ and $\cR(\Theta)$ 
that are close, does there exist a parametrization $\Phi$ with $\cR(\Phi)=\cR(\Theta)$ such that $\Gamma$ and $\Phi$ are close?
\end{quote}
As we will see in Section~\ref{sec:degen}, this question is fundamentally connected to understanding the redundancies and degeneracies of the way that neural networks are parametrized. By suitable regularization, i.e.\@ considering a subspace $\Omega\subseteq\cP_N$ of parametrizations, we can avoid these pathologies and establish a positive answer to the question above. For such a property the term
\emph{inverse stability} was introduced in \cite{Petersen2018TopologicalSize}, which constitutes the only other research conducted in this area, as far as we are aware.
\begin{definition}[Inverse stability]\label{def:InverseStability}
Let $s,\alpha>0$, $N\in\cA$, and $\Omega\subseteq\cP_N$.
We say that the realization map is $(s,\alpha)$ inverse stable on $\Omega$ w.r.t.\@ $\|\cdot\|$, if for all $\Gamma\in\Omega$ and $g\in\cR(\Omega)$ there exists $\Phi\in\Omega$ with
\begin{align}
\cR(\Phi)=g \quad \text{and} \quad \| \Phi- \Gamma\|_\infty \le s \|g-\cR(\Gamma)\|^\alpha.
\end{align}
\end{definition}
In Section~\ref{sec:degen} we will see why inverse stability fails w.r.t.\@ the uniform norm.
Therefore, we consider a norm which takes into account not only the maximum error of the function values but also of the gradients. In mathematical terms, we make use of the Sobolev norm $\|\cdot\|_{W^{1,\infty}(U)}$ (on some domain $U\subseteq \R^{d}$) defined for every (locally) Lipschitz continuous function $g\colon\R^d\to\R^D$ by $\|g\|_{W^{1,\infty}(U)}:=\max\{\|g\|_{L^\infty(U)},|g|_{\W(U)}\}$
with the Sobolev semi-norm ${|\cdot|_{\W(U)}}$ given by
\begin{align}
   |g|_{\W(U)}:=\|Dg\|_{L^\infty(U)}=\esssup{x\in U}\|Dg(x)\|_\infty.
\end{align}
See~\cite{Evans2015MeasureEdition} for further information on Sobolev norms, and~\cite{berner2019towards} for further information on the derivative of ReLU networks.
\subsection{Implications of inverse stability for neural network optimization} \label{sec:opt}
We proceed by demonstrating how inverse stability opens up new perspectives on the optimization problem which arises in neural network training. 
Specifically, consider a loss function $\mathcal{L}\colon C(\R^{d},\R^{D}) \to[0,\infty)$ on the space of continuous functions. 
For illustration, we take the commonly used mean squared error (MSE) which, for training data $((x^i,y^i))_{i=1}^n \in (\R^d \times \R^D)^n$, is given by
\begin{equation}\label{eq:Lossfunction}
    \mathcal{L}(g)=\tfrac{1}{n}\sum_{i=1}^n \|g(x^i)-y^i\|_2^2,\quad \text{for } g\in C(\R^d,\R^D).
\end{equation} 
Typically, the optimization problem is solved over some subspace of parametrizations $\Omega\subseteq\cP_N$, i.e.\@
  \begin{equation} \label{eq:min_par}
      \min_{\Gamma \in \Omega} \,  \mathcal{L}(\cR(\Gamma))=\min_{\Gamma \in \Omega} \, \tfrac{1}{n}\sum_{i=1}^n \|\cR(\Gamma)(x^i)-y^i\|_2^2.
  \end{equation}
From an abstract point of view, by writing $g=\cR(\Gamma)\in \cR(\Omega)$, this is equivalent to the corresponding optimization problem over the space of realizations $\cR(\Omega)$, i.e.\@
  \begin{equation} \label{eq:min_real}
      \min_{g \in \cR(\Omega)}  \mathcal{L}(g)=\min_{g \in \cR(\Omega)} \tfrac{1}{n}\sum_{i=1}^n \|g(x^i)-y^i\|_2^2.
  \end{equation}
However, the loss landscape of the optimization problem~\eqref{eq:min_par} is only properly connected to the loss landscape of the optimization problem~\eqref{eq:min_real} if the realization map is inverse stable on $\Omega$. Otherwise a realization $g\in\cR(\cP_N)$ can be arbitrarily close to a global minimum in the realization space but every parametrization $\Phi$ with $\cR(\Phi)=g$ is far away from the corresponding global minimum in the parametrization space.
Moreover, local minima of~\eqref{eq:min_par} in the parametrization space must correspond to local minima of~\eqref{eq:min_real} in the realization space if and only if we have inverse stability.
\begin{proposition}[Parametrization minimum $\Rightarrow$ realization minimum]
\label{prop:local}
Let $N\in\cA$, $\Omega\subseteq\cP_N$ and 
let the realization map be $(s,\alpha)$ inverse stable on $\Omega$ w.r.t.\@ $\|\cdot\|$.
Let $\Gamma_*\in\Omega$ be a local minimum of $\L\circ\cR$ on $\Omega$ with radius $r > 0$, i.e.\@ for all $\Phi\in \Omega$ with $\|\Phi-\Gamma_*\|_\infty\leq r$ it holds that 
\begin{equation}
    \L(\cR(\Gamma_*))\leq \L(\cR(\Phi)).
\end{equation}
Then $\cR(\Gamma_*)$ is a local minimum of $\L$ on $\cR(\Omega)$ with radius $(\frac{r}{s})^{1/\alpha}$, i.e.\@ for all $g\in \cR(\Omega)$ with $\|g-\cR(\Gamma_*)\|\leq (\frac{r}{s})^{1/\alpha}$ it holds that 
\begin{equation}
    \L(\cR(\Gamma_*))\leq \L(g).
\end{equation}
\end{proposition}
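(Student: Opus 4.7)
The plan is a direct unpacking of the two hypotheses, with no substantive obstacle. Given an arbitrary $g\in\cR(\Omega)$ with $\|g-\cR(\Gamma_*)\|\le (r/s)^{1/\alpha}$, I want to bound $\L(g)$ from below by $\L(\cR(\Gamma_*))$. The only tool available is the local minimum hypothesis, which lives in parametrization space; hence I need to produce a parametrization $\Phi\in\Omega$ that (i) realizes $g$, i.e.\ $\cR(\Phi)=g$, and (ii) lies within distance $r$ of $\Gamma_*$ in $\|\cdot\|_\infty$, so that the local minimum assumption applies at $\Phi$.

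Both properties are delivered in one step by inverse stability applied to $\Gamma_*\in\Omega$ and $g\in\cR(\Omega)$: it yields some $\Phi\in\Omega$ with $\cR(\Phi)=g$ and $\|\Phi-\Gamma_*\|_\infty\le s\,\|g-\cR(\Gamma_*)\|^\alpha$. The radius $(r/s)^{1/\alpha}$ in the statement is precisely chosen so that the latter estimate yields $\|\Phi-\Gamma_*\|_\infty\le s\bigl((r/s)^{1/\alpha}\bigr)^\alpha=r$, matching the radius of the local minimum. The local minimum property of $\Gamma_*$ then gives $\L(\cR(\Gamma_*))\le \L(\cR(\Phi))=\L(g)$, which is the desired conclusion.

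The key step is therefore just the application of Definition~\ref{def:InverseStability}; the only algebraic check is the exponent arithmetic $s\cdot((r/s)^{1/\alpha})^\alpha=r$, which fixes the radius in the conclusion. There is no genuinely hard part — the proposition is essentially a tautological consequence of how inverse stability is formulated, and its content is conceptual rather than technical: it identifies inverse stability as exactly the ingredient needed to transport local optimality from $\Omega$ to $\cR(\Omega)$.
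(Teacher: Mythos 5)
Your proof is correct and follows exactly the same route as the paper's: apply inverse stability at $\Gamma_*$ to produce $\Phi\in\Omega$ with $\cR(\Phi)=g$ and $\|\Phi-\Gamma_*\|_\infty\le s\|g-\cR(\Gamma_*)\|^\alpha\le r$, then invoke the local minimum hypothesis. The exponent arithmetic fixing the radius is the only computation in either version, and you carry it out correctly.
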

See Appendix~\ref{sec:AppProof1} for a proof and Example~\ref{ex:loc} for a counterexample in the case that inverse stability is not given.
Note that in~\eqref{eq:min_real} we consider a problem with convex loss function but non-convex feasible set, see~\cite[Section 3.2]{Petersen2018TopologicalSize}. This opens up new avenues of investigation using tools from functional analysis and allows utilizing recent results~\cite{Gribonval2019ApproximationNetworks,Petersen2018TopologicalSize} exploring the topological properties of neural network realization spaces.\\
As a concrete demonstration we provide with Theorem~\ref{thm:opt} a strong result obtained on the realization space, which estimates the quality of a local minimum based on its radius and the approximation capabilities of the chosen architecture for a class of functions $S$.
Specifically let $C>0$, let
${\Lambda\colon \cB\to [0,\infty)}$ be a quasi-convex regularizer, and define
\begin{equation}\label{eq:def_S}
    S:=\{f\in \cB\colon \Lambda(f)\le C\}.
\end{equation}
 We denote the sets of regularized parametrizations by \begin{equation}
    \Omega_N:=\{\Phi\in\cP_N\colon \Lambda(\cR(\Phi))\leq C\}
\end{equation} 
and assume that the loss function $\L$ is convex and $c$-Lipschitz continuous on $S$. Note that virtually all relevant loss functions are convex and locally Lipschitz continuous on $C(\R^{d},\R^{D})$.
Employing Proposition~\ref{prop:local}, inverse stability can then be used to derive the following result for the practically relevant parametrized problem, showing that for sufficiently large architectures local minima of a regularized neural network optimization problem are almost optimal.
\begin{theorem}[Almost optimality of local parameter minima] \label{thm:optPara}
Assume that $S$ is compact in the $\|\cdot\|$-closure of $\cR(\cP)$ and that for every $N\in\cA$ the realization map is $(s,\alpha)$ inverse stable on $\Omega_N$ w.r.t.\@ $\|\cdot\|$\,.
Then for all $\eps,r>0$
there exists $n(\eps,r)\in\cA$
such that for every $N\in\cA$ with $N_1\ge n_1(\eps,r),\dots,N_{L-1}\ge n_{L-1}(\eps,r)$ the following holds: \\
Every local minimum $\Gamma_*$ with radius at least $r$ of $\min_{\Gamma\in\Omega_N}\L(\cR(\Gamma))$ satisfies
\begin{equation}
    \L(\cR(\Gamma_*))\leq\min_{\Gamma\in\Omega_N}\L(\cR(\Gamma))+\eps.
\end{equation}
\end{theorem}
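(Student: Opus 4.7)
The plan is to transfer the problem from parametrization space to realization space via inverse stability and then close the argument there. Proposition~\ref{prop:local} provides the transfer immediately: a radius-$r$ local minimum $\Gamma_*$ of $\L\circ\cR$ on $\Omega_N$ forces $\cR(\Gamma_*)$ to be a radius-$\delta$ local minimum of $\L$ on $\cR(\Omega_N)$ with $\delta:=(r/s)^{1/\alpha}$. After this reduction everything takes place inside $\cR(\Omega_N)\subseteq S$, and it suffices to show that such a realization-space local minimum is $\eps$-optimal once the architecture is large enough.

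Next I would set up a convexity-plus-approximation argument. Fix an almost-minimizer $\Gamma'\in\Omega_N$ with $\L(\cR(\Gamma'))\le\inf_{\Gamma\in\Omega_N}\L(\cR(\Gamma))+\eps/2$, set $g':=\cR(\Gamma')\in S$, and consider the segment $h_\lambda:=(1-\lambda)\cR(\Gamma_*)+\lambda g'$ for $\lambda\in[0,1]$. Quasi-convexity of $\Lambda$ together with $\cR(\Gamma_*),g'\in S$ keeps $h_\lambda$ inside $S$, and compactness of $S$ gives a finite diameter $M$. Choosing $\lambda:=\delta/(2M)$ then enforces $\|h_\lambda-\cR(\Gamma_*)\|\le\delta/2$.

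Because $S$ lies in the $\|\cdot\|$-closure of $\cR(\cP)$ and is compact, a uniform approximation step should produce, for any preassigned $\mu>0$, an architecture $n(\eps,r)\in\cA$ (depending on $\mu$, $\delta$, and $M$, hence on $\eps$ and $r$) such that for every $N\in\cA$ with $N_1\ge n_1(\eps,r),\dots,N_{L-1}\ge n_{L-1}(\eps,r)$ and every $f\in S$ there exists $\tilde\Phi\in\Omega_N$ with $\|\cR(\tilde\Phi)-f\|\le\mu$. Applying this to $f=h_\lambda$ with $\mu:=\min\{\delta/2,\,\eps\lambda/(2c)\}$ and writing $\tilde g:=\cR(\tilde\Phi)$ gives $\|\tilde g-\cR(\Gamma_*)\|\le\delta$, so the local-minimum property of $\cR(\Gamma_*)$ on $\cR(\Omega_N)$ yields $\L(\cR(\Gamma_*))\le\L(\tilde g)$. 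Combining $c$-Lipschitz continuity, $\L(\tilde g)\le\L(h_\lambda)+c\mu$, with convexity, $\L(h_\lambda)\le(1-\lambda)\L(\cR(\Gamma_*))+\lambda\L(g')$, and rearranging produces $\L(\cR(\Gamma_*))\le\L(g')+c\mu/\lambda\le\L(g')+\eps/2$, which together with the choice of $\Gamma'$ finishes the proof.

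The main obstacle is the approximation step: we need the approximant $\tilde\Phi$ to satisfy $\cR(\tilde\Phi)\in\cR(\Omega_N)$, i.e.\ to respect the regularizer bound $\Lambda\le C$, and not merely approximate $h_\lambda$ in $\|\cdot\|$. The compactness of $S$ in the closure of $\cR(\cP)$ is what allows a single architecture $n(\eps,r)$ that works uniformly over $f\in S$ (and in particular over the one-parameter family $\{h_\lambda\}$), but producing networks whose realizations inherit the quasi-convex constraint requires care and is essentially the content of the earlier realization-space Theorem~\ref{thm:opt}. Given that ingredient, the parametrization-space statement reduces to bookkeeping the radius scaling $r\mapsto(r/s)^{1/\alpha}$ delivered by Proposition~\ref{prop:local} and the convexity estimate above.
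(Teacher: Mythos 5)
Your proposal is correct and follows essentially the same route as the paper: Proposition~\ref{prop:local} transfers the radius-$r$ parameter-space minimum to a radius-$(r/s)^{1/\alpha}$ realization-space minimum, and the convexity-plus-approximation argument you then carry out on the segment toward the (almost-)minimizer is precisely the content of the paper's Theorem~\ref{thm:opt}, with the same choice of step size relative to the local-minimum radius and the same Lipschitz/convexity bookkeeping. The obstacle you flag — that the approximant must land in $\Omega_N$, i.e.\ respect $\Lambda\le C$ — is indeed exactly what the approximability assumption~\eqref{eq:approx} encodes and what the paper derives from compactness of $S$.
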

See Appendix~\ref{sec:AppProof1} for a proof and note that here it is important to have an inverse stability result, where the parameters $(s,\alpha)$ do not depend on the size of the architecture, which we achieve for $L=2$ and $\cB=W^{1,\infty}$.
Suitable $\Lambda$ would be Besov norms which constitute a common regularizer in image and signal processing.
Moreover, note that the required size of the architecture in Theorem~\ref{thm:optPara} can be quantified, if one has approximation rates for $S$. In particular, this approach allows the use of approximation results in order to explain the success of neural network optimization and enables a combined study of these two aspects, which, to the best of our knowledge, has not been done before. 
Unlike in recent literature, our result needs no assumptions on the sample set (incorporated in the loss function, see \eqref{eq:Lossfunction}), in particular we do not require \enquote{overparametrization} with respect to the sample size. Here the required size of the architecture only depends on the complexity of $S$, i.e.\@ the class of functions one wants to approximate, the radius of the local minima of interest, the Lipschitz constant of the loss function, and the parameters of the inverse stability.\\
In the following we restrict ourselves to two-layer ReLU networks without biases, where we present a proof for $(4,1/2)$ inverse stability w.r.t.\@ the Sobolev semi-norm on a suitably regularized space of parametrizations. Both the regularizations as well as the stronger norm (compared to the uniform norm) will shown to be necessary in Section~\ref{sec:degen}. 
We now present, in an informal way, a collection of our main results. A short proof making the connection to the formal results can be found in Appendix~\ref{sec:AppProof1}.
\begin{corollary}[Inverse stability and implications - colloquial]
\label{cor:inform}
Suppose we are given data $((x^i,y^i))_{i=1}^n \in (\R^d \times \R^D)^n$ and want to solve a typical minimization problem for ReLU networks with shallow architecture $N=(d,N_1,D)$, i.e.\@
\begin{equation} \label{eq:opt_typ}
\min_{\Gamma \in \cP_N} \, \tfrac{1}{n}\sum_{i=1}^n \|\cR(\Gamma)(x^i)-y^i)\|_2^2.
\end{equation}
First we augment the architecture to $\tilde{N}=(d+2,N_1+1,D)$, while omitting the biases, and augment the samples to $\tilde{x}^i=(x^i_1,\dots,x^i_d,1,-1)$. Moreover, we assume that the parametrizations 
\begin{equation}
  \Phi=\left(\big( [ a_1 | \dots | a_{N_1+1} ]^T , 0 \big),([ c_1 | \dots | c_{N_1+1} ],0) \right) \in \Omega \subseteq \cP_{\tilde{N}}
\end{equation}
are regularized such that 
\begin{enumerate}[label=C.\arabic*]
    \item the network is balanced, i.e.\@ $\|a_i\|_\infty=\|c_i\|_\infty$,
    \item no non-zero weight vectors in the first layer are redundant, i.e.\@ $a_i \not\parallel a_j$,
    \item \label{it:C3cor} the last two coordinates of each weight vector $a_i$ are strictly positive.
\end{enumerate}
Then for the new minimization problem
\begin{equation}\label{eq:newmini}
    \min_{\Phi \in \Omega} \, \tfrac{1}{n}\sum_{i=1}^n  \|\cR(\Phi)(\tilde{x}^i)-y^i\|_2^2
    \end{equation}
the following holds:
\begin{enumerate}
    \item \label{it:col_first} If $\Phi_*$ is a local minimum of \eqref{eq:newmini} with radius $r$, then $\cR(\Phi_*)$ is a local minimum of $\min_{g \in \cR(\Omega)} \, \tfrac{1}{n}\sum_{i=1}^n  \|g(\tilde{x}^i)-y^i\|_2^2$ with radius at least $\tfrac{r^2}{16}$ w.r.t.\@ $|\cdot|_{\W}$.
    \item \label{it:col_sec} The global minimum of~\eqref{eq:newmini} is at least as good as the global minimum of~\eqref{eq:opt_typ}, i.e.\@
    \begin{equation}\label{augmentation}
    \min_{\Phi \in \Omega} \, \tfrac{1}{n}\sum_{i=1}^n \|\cR(\Phi)(\tilde{x}^i)-y^i\|_2^2\le \min_{\Gamma \in \cP_{N}} \, \tfrac{1}{n}\sum_{i=1}^n \|\cR(\Gamma)(x^i)-y^i\|_2^2.
    \end{equation}
    \item By further regularizing~\eqref{eq:newmini} in the sense of Theorem~\ref{thm:optPara}, we can estimate the quality of its local minima.
\end{enumerate}
\end{corollary}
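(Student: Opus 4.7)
The three items are essentially corollaries of the formal results proved earlier, and I would handle them in increasing order of difficulty. For item~\ref{it:col_first}, the plan is simply to apply Proposition~\ref{prop:local} to $\Omega$ together with the main inverse stability result (Theorem~\ref{thm:main}), which establishes $(s,\alpha)=(4,1/2)$ inverse stability with respect to $|\cdot|_{\W}$. Substituting gives the realization-space radius $(r/s)^{1/\alpha}=(r/4)^{2}=r^{2}/16$, exactly as stated. Item~3 is equally mechanical: once \eqref{eq:newmini} is further regularized by a quasi-convex $\Lambda$ on the realization space as in \eqref{eq:def_S}, the hypotheses of Theorem~\ref{thm:optPara} are met (crucially, the inverse stability constants do not depend on $N$) and one applies it verbatim.

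The only substantive step is item~\ref{it:col_sec}, the emulation inequality. My plan is to produce, for every $\Gamma=((A_1,b_1),(A_2,b_2))\in\cP_N$ and every $\eps>0$, a parametrization $\Phi_\eps\in\Omega$ whose realization on the augmented samples agrees with $\cR(\Gamma)$ on the original samples up to $O(\eps)$; sending $\eps\to 0$ then yields~\eqref{augmentation} (interpreting the minima as infima when necessary). Starting from $\Gamma$, decompose each bias $b_{1,i}=b_{1,i}^{+}-b_{1,i}^{-}$ with $b_{1,i}^{\pm}>0$ and extend each row $a_i$ of $A_1$ to $\tilde a_i=(a_i,b_{1,i}^{+},b_{1,i}^{-})$, so that $\tilde a_i^{\top}\tilde x^i=a_i^{\top}x^i+b_{1,i}$. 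The extra $(N_1+1)$-th first-layer neuron I take as $\tilde a_{N_1+1}=(0,\dots,0,\alpha,\beta)$ with $\alpha>\beta>0$, whose ReLU output on every $\tilde x^i$ equals the constant $\alpha-\beta$; scaling the corresponding column of the (bias-free) output weight matrix by $b_2/(\alpha-\beta)$ reproduces the bias of the second layer. The identity $\cR(\Phi)(\tilde x^i)=\cR(\Gamma)(x^i)$ then holds exactly on the sample set, and C.3 is satisfied by construction.

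The remaining regularity would then be obtained by rescaling and perturbation. Positive homogeneity of $\ReLU$ gives $c_i\ReLU(\tilde a_i^{\top}\tilde x)=(c_i/\lambda_i)\ReLU((\lambda_i\tilde a_i)^{\top}\tilde x)$ for any $\lambda_i>0$, so the choice $\lambda_i=\sqrt{\|c_i\|_\infty/\|\tilde a_i\|_\infty}$ enforces C.1 without altering the realization, while preserving both positivity of the last two coordinates and the parallelism classes of the rows. To enforce C.2, I add a generic $\eps$-perturbation to the first $d$ coordinates of each $\tilde a_i$; the last two coordinates stay strictly positive and, by continuity of $\cR$ in the parameters, the loss on the finite training set changes by at most $O(\eps)$. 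The main obstacle, and the only place where some care is needed, is the bookkeeping when a neuron is effectively dead: if some output weight $c_i$ vanishes, balancing would force $\tilde a_i=0$ and destroy C.3, so in that case I would first replace the dead neuron by a non-trivial neuron of the form $(0,\dots,0,\alpha,\beta)$ with $\beta>\alpha>0$, whose ReLU output is identically zero on the hyperplane $\tilde x_{d+1}=1$, $\tilde x_{d+2}=-1$; taking the $(\alpha,\beta)$-pairs pairwise linearly independent and generic with respect to the rest avoids any conflict with C.2. Apart from this case distinction, the construction is routine.
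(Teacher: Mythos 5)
Your proposal follows the paper's proof in structure: items 1 and 3 are handled exactly as in the paper, by identifying $\Omega$ with $\cN^*_{(d+2,N_1+1,D)}$, invoking the uniform $(4,1/2)$ inverse stability on that space, and feeding it into Proposition~\ref{prop:local} (giving $(r/4)^{2}=r^{2}/16$) resp.\@ Theorem~\ref{thm:optPara}. One small correction: the $(4,1/2)$ statement you need is Theorem~\ref{cor:main} (uniform inverse stability on $\cN^*_N$, valid for arbitrary output dimension $D$), not Theorem~\ref{thm:main}, which is the single-output technical version with the $\beta+2r^{1/2}$ bound.

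For item 2 the paper simply cites Lemma~\ref{lem:Tech2}, which is an \emph{exact} reparametrization: for every $\Theta\in\cP_{(d,N_1,D)}$ it produces $\Gamma\in\cN^*_{(d+2,N_1+1,D)}$ with $\cR(\Gamma)(\tilde{x})=\cR(\Theta)(x)$ for all $x$, so the losses coincide and no limit is needed. Your construction is the same in its essential ingredients (splitting $b_i=b_i^+-b_i^-$ into two strictly positive extra coordinates, an extra neuron of the form $(0,\dots,0,\alpha,\beta)$ to emulate the output bias, rebalancing via positive homogeneity, and placeholder directions for dead neurons), but you enforce C.2 by a generic $\eps$-perturbation followed by $\eps\to 0$, which only yields~\eqref{augmentation} with the left-hand side read as an infimum. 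That is acceptable (the set $\Omega$ is open in the relevant coordinates, so the minimum there is already an abuse of notation), and your explicit treatment of parallel rows and dead neurons is in fact more careful than the paper's, whose lemma proof silently assumes the augmented rows $(a_i,b_i^+,b_i^-)$ are pairwise non-parallel; a cleaner alternative to perturbation would be to first merge affinely parallel neurons via the identity~\eqref{eq:redundant} and assign the freed slots pairwise non-parallel inactive directions, which recovers the exact equality of the paper.
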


This argument is not limited to the MSE loss function but works for any loss function based on evaluating the realization. 
The omission of bias weights is standard in neural network optimization literature~\cite{choromanska2015loss,Du2018GradientNetworks,kawaguchi2016deep,li2018learning}. While this severely limits the functions that can be realized with a given architecture, it is sufficient to augment the problem by one dimension in order to recover the full range of functions that can be learned~\cite{Allen-Zhu2018AOver-Parameterization}. 
Here we augment by two dimensions, so that the third regularization condition~\ref{it:C3cor} can be fulfilled without loosing range. 
Moreover, note that, for simplicity of presentation, the regularization assumptions stated above are stricter than necessary and possible relaxations are discussed in Section~\ref{sec:main}.
\section{Obstacles to inverse stability - degeneracies of ReLU parametrizations}
\label{sec:degen}
In the remainder of this paper we focus on shallow ReLU networks without biases and define the corresponding space of parametrizations with architecture $N=(d,m,D)$ as ${\cN_N:=\R^{m\times d}\times\R^{D\times m}}$.
The realization map\footnote{This is a slight abuse of notation, justified by the the fact that $\cR$ acts the same on $\cP_N$ with zero biases $b_1,b_2$ and weights $A_1=A$ and $A_2=C$.} $\cR$ is, for every $\Theta=(A,C)=\big( [a_1 |\dots |a_m ]^T, [ c_1 | \dots | c_m ] \big) \in\cN_N$, given by
\begin{equation}
   \R^d \ni x \mapsto \cR(\Theta)(x)=C\rho(Ax)=\sum_{i=1}^m c_i \rho (\langle a_i , x \, \rangle).
\end{equation}
Note that each function $x\mapsto c_i \rho (\< a_i , x \>)$ represents a so-called ridge function which is zero on the half-space $\{x\in\R^d \colon \< a_i , x \> \le 0 \}$ and linear with constant derivative $c_i a_i^T\in\R^D\times\R^d$ on the other half-space. Thus, the $a_i$ are the normal vectors of the separating hyperplanes $\{x\in\R^d \colon \< a_i , x \> = 0 \}$ and consequently we refer to the weight vectors $a_i$ also as the directions of $\Theta$. Moreover, for $\Theta\in\cN_N$ it holds that $\cR(\Theta)(0)=0$ and, as long as the domain of interest $U\subseteq \R^d$ contains the origin, the Sobolev norm $\|\cdot\|_{\W(U)}$ is equivalent to its semi-norm, since
\begin{equation}
    \|\cR(\Theta)\|_{L^\infty(U)} \le
    \sqrt{d} \, \operatorname{diam}(U)  |\cR(\Theta)|_{\W},
\end{equation}
see also inequalities of Poincaré-Friedrichs type~\cite[Subsection 5.8.1]{Evans2010PartialEdition}.
Therefore, in the rest of the paper we will only consider the Sobolev semi-norm\footnote{For $m\in\N$ we abbreviate $\oneto{m}:=\{1,\dots,m\}.$}
\begin{equation} \label{eq:s_semi}
    |\cR(\Theta)|_{\W(U)}=\esssup{x\in U} \Big\| \, \sum_{i\in[m] \colon \< a_i , x \> > 0} c_i a_i^T \, \Big\|_\infty.
\end{equation}
In~\eqref{eq:s_semi} one can see that in our setting
$|\cdot|_{\W(U)}$ 
is independent of $U$ (as long as $U$ contains a neighbourhood of the origin) and will thus be abbreviated by $|\cdot|_{\W}$. 

\subsection{Failure of inverse stability w.r.t. uniform norm}
All proofs for this section can be found in Appendix~\ref{sec:AppProof2}.
We start by showing that inverse stability fails w.r.t.\@ the uniform norm. This example is adapted from~\cite[Theorem 5.2]{Petersen2018TopologicalSize} and represents, to the best of our knowledge, the only degeneracy which has already been observed before.
  \begin{example}[Failure due to exploding gradient]\label{Ex0}
  Let 
  $\Gamma:=(0,0)\in\cN_{(2,2,1)}$ and $g_k\in\cR(\cN_{(2,2,1)})$ be given by
  (see Figure~\ref{fig:inv})
  \begin{equation}
      g_k(x) := k\rho(\< (k,0),x\>) - k \rho(\< (k,-\tfrac{1}{k^2}),x\>), \quad k\in\N.
  \end{equation}
  Then for every sequence $(\Phi_k)_{k\in\N}\subseteq\cN_{(2,2,1)}$ with $\cR(\Phi_k)=g_k$ it holds that
  \begin{align}
      \lim_{k\to\infty} \| \cR(\Phi_k) -\cR(\Gamma)\|_{L^\infty((-1,1)^2)} = 0 \quad \text{and} \quad \lim_{k\to\infty} \|\Phi_k-\Gamma\|_\infty=\infty.
  \end{align}
  \end{example}
\begin{figure}[!tb]
\centering
\begin{tikzpicture}

\begin{groupplot}[
     group style = {group size = 2 by 1},
     ymin = -2.5,
     ymax = 2.5,
     xmin = -2.5, 
     xmax = 2.5, 
     zmin = -2.5, 
     zmax = 2.5,
     point meta max=1.5, 
     point meta min=-1.5,
     view/h=15]

\nextgroupplot[
    point meta max=1.,
    point meta min=-1.,
    width=5cm,
    height=5cm]

\addplot3[
    opacity=0.8,
    table/row sep=\\,
    patch,
    patch type=polygon,
    vertex count=4,
    patch table ={
        0 1 2 2\\
        0 2 3 4\\
        0 4 5 5\\
        0 5 6 1\\
    }
]
table {
    x y z\\
    0 0 0\\
    0 2.5 0\\
    2.5 2.5 2.5\\
    2.5 -2.5 -2.5\\
    0 -2.5 -2.5\\
    -2.5 -2.5 0\\
    -2.5 2.5 0\\
};

\addplot3[mark=none,black!50, line width=1pt,opacity=0.5] coordinates {(0,-2.5,-2.5) (0,2.5,-2.5)};
\addplot3[mark=none, black, line width=1pt,opacity=0.5] coordinates {(2.5,2.5,-2.5) (-2.5,-2.5,-2.5)};

\draw[->, black!50, line width=1pt, opacity=0.5] (axis cs:0,0,-2.5) -- (axis cs:1,0,-2.5);
\draw[->, line width=1pt, opacity=0.5] (axis cs:0,0,-2.5) -- (axis cs:1,-1,-2.5);

\nextgroupplot[
    width=5cm,
    height=5cm
]

\addplot3[
    opacity=0.8,
    table/row sep=\\,
    patch,
    patch type=polygon,
    vertex count=5,
    patch table ={
        0 1 2 2 2\\
        0 2 3 4 5\\
        0 5 6 6 6\\
        0 6 7 8 1\\
    }
]
table {
    x y z\\
    0 0 0\\
    0 2.5 0\\
    0.3125 2.5 1.25\\
    2.5 2.5 1.25\\
    2.5 -2.5 -1.25\\
    0 -2.5 -1.25\\
    -0.3125 -2.5 0\\ 
    -2.5 -2.5 0\\
    -2.5 2.5 0\\
};

\addplot3[mark=none,black!50, line width=1pt,opacity=0.5] coordinates {(0,-2.5,-2.5) (0,2.5,-2.5)};
\addplot3[mark=none, black, line width=1pt,opacity=0.5] coordinates {(-0.3125,-2.5,-2.5) (0.3125,2.5,-2.5)};

\draw[->, black!50, line width=1pt, opacity=0.5] (axis cs:0,0,-2.5) -- (axis cs:2,0,-2.5);
\draw[->, line width=1pt, opacity=0.5] (axis cs:0,0,-2.5) -- (axis cs:2,-0.25,-2.5);

\end{groupplot}
\end{tikzpicture}
\caption{The figure shows $g_k$ for $k=1,2$.}
\label{fig:inv}
\end{figure}
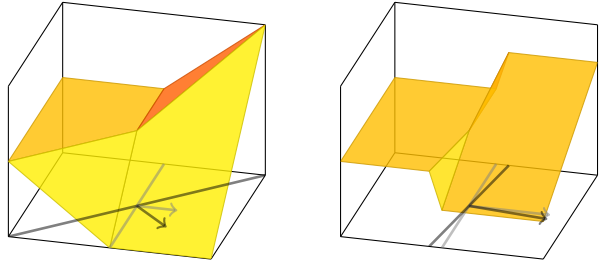
In particular, note that inverse stability fails here even for a non-degenerate parametrization of the zero function $\Gamma=(0,0)$.
However, for this type of counterexample the magnitude of the gradient of $\cR(\Phi_k)$ needs to go to infinity, which is our motivation for looking at inverse stability w.r.t.\@ $|\cdot|_{\W}$.
  
\subsection{Failure of inverse stability w.r.t. Sobolev norm}\label{sec:SobolevPath}
In this section we present four degenerate cases where inverse stability fails w.r.t.\@ $|\cdot|_{\W}$. This collection of counterexamples is complete in the sense that we can establish inverse stability under assumptions which are designed to exclude these four pathologies.
\begin{example}[Failure due to complete unbalancedness]\label{Ex1}
Let $r>0$, $\Gamma:=\big((r,0),0\big)\in\cN_{(2,1,1)}$
and $g_k\in\cR(\cN_{(2,1,1)})$ be given by (see Figure~\ref{fig:bal})
\begin{align}
    g_k(x)=\tfrac{1}{k}\rho(\<(0,1),x\>), \quad k\in\N.
\end{align}
Then for every $k\in\N$ and $\Phi_k\in\cN_{(2,1,1)}$ with $\cR(\Phi_k)=g_k$ it holds that 
\begin{equation}
    |\cR(\Phi_k)-\cR(\Gamma)|_{\W}=\tfrac{1}{k}
    \quad \text{and} \quad  
    \|\Phi_k-\Gamma\|_\infty \geq r.
\end{equation}
\end{example}
This is a very simple example of a degenerate parametrization of the zero function, since $\cR(\Gamma)=0$ regardless of choice of $r$. The issue here is that we can have a weight pair, i.e.\@ $((r,0),0)$, where the product is independent of the value of one of the parameters. Note that in Example~\ref{Ex1b} one can see a slightly more subtle version of this pathology by considering $\Gamma_k:=\big((k,0),\tfrac{1}{k^2}\big)\in\cN_{(2,1,1)}$ instead. In that case one could still get an inverse stability estimate for each fixed $k$; the parameters of inverse stability $(s,\alpha)$ would however deteriorate with increasing $k$. In particular this demonstrates the need for some sort of balancedness of the parametrization, i.e.\@ control over $\|c_i\|_\infty$ and $\|a_i\|_\infty$ individually relative to $\|c_i\|_\infty \|a_i\|_\infty$.\\
Inverse stability is also prevented by redundant directions as the following example illustrates.
\begin{example}[Failure due to redundant directions]\label{Ex2}
Let 
\begin{align}
     \Gamma:= \left( \begin{bmatrix} 1 & 0 \\ 1 & 0 \end{bmatrix}, (1,1) \right) \in  \cN_{(2,2,1)}
\end{align}
and $g_k\in\cR(\cN_{(2,2,1)})$ be given by (see Figure~\ref{fig:red})
\begin{align}
    g_k(x):=2\rho(\<(1,0),x\>)+\tfrac{1}{k}\rho(\<(0,1),x\>), \quad k\in\N.
\end{align}
Then for every $k\in\N$ and $\Phi_k\in\cN_{(2,2,1)}$ with $\cR(\Phi_k)=g_k$ it holds that
\begin{equation}
   |\cR(\Phi_k)-\cR(\Gamma)|_{\W}=\tfrac{1}{k}
   \quad \text{and} \quad  
   \|\Phi_k-\Gamma\|_\infty \geq  1.
\end{equation}
\end{example}
 \begin{figure}[!tb]
    \centering
    \begin{minipage}{.45\textwidth}
        \centering
        \begin{tikzpicture}

\begin{groupplot}[
     group style = {group size = 2 by 1},
     ymin = -1.5,
     ymax = 1.5,
     xmin = -1.5, 
     xmax = 1.5, 
     zmin = -1.5, 
     zmax = 1.5,
     point meta max=1.5, 
     point meta min=-1.5,
     view/h=15]

\nextgroupplot[
    width=4cm,
    height=4cm]

\addplot3[
    opacity=0.8,
    table/row sep=\\,
    patch,
    patch type=polygon,
    vertex count=4,
    patch table ={
        0 1 2 3\\
        3 4 5 0\\
    }
]
table {
    x y z\\
    0 1.5 0\\
    1.5 1.5 0\\
    1.5 -1.5 0\\
    0 -1.5 0\\
    -1.5 -1.5 0\\
    -1.5 1.5 0\\
};

\addplot3[mark=none,black, line width=1pt,opacity=0.5] coordinates {(0,-1.5,-1.5) (0,1.5,-1.5)};

\draw[->, black, line width=1pt, opacity=0.5] (axis cs:0,0,-1.5) -- (axis cs:0.5,0,-1.5);

\nextgroupplot[
    width=4cm,
    height=4cm]

\addplot3[
    opacity=0.8,
    table/row sep=\\,
    patch,
    patch type=polygon,
    vertex count=4,
    patch table ={
        0 1 4 5\\
        1 2 3 4\\
    }
]
table {
    x y z\\
    1.5 1.5 0.5\\
    1.5 0 0\\
    1.5 -1.5 0\\
    -1.5 -1.5 0\\
    -1.5 0 0\\
    -1.5 1.5 0.5\\
};

\addplot3[mark=none,black, line width=1pt,opacity=0.5] coordinates {(-1.5,0,-1.5) (1.5,0,-1.5)};

\draw[->, black, line width=1pt, opacity=0.5] (axis cs:0,0,-1.5) -- (axis cs:0,1,-1.5);

\end{groupplot}
\end{tikzpicture}
        \caption{Shows $\cR(\Gamma)$ ($r=0.5$) and $g_3$.}
        \label{fig:bal}
    \end{minipage}
    \hspace{2em}
    \begin{minipage}{0.45\textwidth}
        \centering
        \begin{tikzpicture}

\begin{groupplot}[
     group style = {group size = 2 by 1},
     ymin = -1.5,
     ymax = 1.5,
     xmin = -1.5, 
     xmax = 1.5, 
     zmin = -4.5, 
     zmax = 4.5,
     point meta max=1.5, 
     point meta min=-1.5,
     view/h=15]

\nextgroupplot[
    width=4cm,
    height=4cm]

\addplot3[
    opacity=0.8,
    table/row sep=\\,
    patch,
    patch type=polygon,
    vertex count=4,
    patch table ={
        0 1 2 3\\
        3 4 5 0\\
    }
]
table {
    x y z\\
    0 1.5 0\\
    1.5 1.5 3\\
    1.5 -1.5 3\\
    0 -1.5 0\\
    -1.5 -1.5 0\\
    -1.5 1.5 0\\
};

\addplot3[mark=none,black, line width=1pt,opacity=0.5] coordinates {(0,-1.5,-4.5) (0,1.5,-4.5)};

\draw[->, black, line width=1pt, opacity=0.5] (axis cs:0,0,-4.5) -- (axis cs:1,0,-4.5);

\nextgroupplot[
    width=4cm,
    height=4cm
]

\addplot3[
    opacity=0.8,
    table/row sep=\\,
    patch,
    patch type=polygon,
    vertex count=4,
    patch table ={
        0 1 2 3\\
        0 3 4 5\\
        0 5 6 7\\
        0 7 8 1\\
    }
]
table {
    0 0 0\\
    0 1.5 0.75\\
    1.5 1.5 3.75\\
    1.5 0 3\\
    1.5 -1.5 3\\
    0 -1.5 0\\
    -1.5 -1.5 0\\
    -1.5 0 0\\
    -1.5 1.5 0.75\\
};

\addplot3[mark=none,black, line width=1pt,opacity=0.5] coordinates {(0,-1.5,-4.5) (0,1.5,-4.5)};
\addplot3[mark=none, black!50, line width=1pt,opacity=0.5] coordinates {(-1.5,0,-4.5) (1.5,0,-4.5)};

\draw[->, black, line width=1pt, opacity=0.5] (axis cs:0,0,-4.5) -- (axis cs:1,0,-4.5);
\draw[->, black!50, line width=1pt, opacity=0.5] (axis cs:0,0,-4.5) -- (axis cs:0,1,-4.5);

\end{groupplot}
\end{tikzpicture}
        \caption{Shows $\cR(\Gamma)$ and $g_2$.}
        \label{fig:red}
    \end{minipage}
\end{figure}
The next example shows that not only redundant weight vectors can cause issues, but also weight vectors of opposite direction, as they would allow for a (balanced) degenerate parametrization of the zero function. 
\begin{example}[Failure due to opposite weight vectors 1]\label{Ex3}
Let $a_i\in\R^d$, $i\in\m$, be pairwise linearly independent with $\|a_i\|_\infty=1$ and
$\sum_{i=1}^m a_i=0$. We define 
\begin{equation}
\Gamma:= \left( [ a_1 | \dots | a_m |  -a_1 | \dots | -a_m ]^T, \big(1 ,\dots,1, -1, \dots, -1 \big) \right) \in  \cN_{(d,2m,1)}.
\end{equation}
Now let $v\in\R^d$ with $\|v\|_\infty =1$ be linearly independent to each 
 $a_i$, $i\in[m]$, and let $g_k\in\cR(\cN_{(d,2m,1)})$ be given by (see Figure~\ref{fig:flip1})
 \begin{align}
     g_k(x)=\tfrac{1}{k}\rho(\<v,x\>), \quad k\in\N.
 \end{align}
Then there exists a constant $C>0$ such that for every $k\in\N$ and every $\Phi_k\in\cN_{(d,2m,1)}$ with $\cR(\Phi_k)=g_k$ it holds that
\begin{align}
   |\cR(\Phi_k)-\cR(\Gamma)|_{\W} = \tfrac{1}{k}
   \quad \text{and} \quad  
   \|\Phi_k-\Gamma\|_\infty \geq C.
\end{align}
\end{example}
Thus we will need an assumption which prevents each individual $\Gamma$ in our restricted set from having pairwise linearly dependent weight vectors, i.e.\@ coinciding hyperplanes of non-differentiability. This, however, does not suffice as is demonstrated by the next example, which shows that the relation between the hyperplanes of the two realizations matters.
\begin{example}[Failure due to opposite weight vectors 2]\label{Ex4}
We define the weight vectors
\begin{equation}
    a_1^k=(k , k , \tfrac{1}{k}), \quad  a_2^k=( -k , k , \tfrac{1}{k}),\quad a_3^k=(0 , -\sqrt2 k , \tfrac{1}{\sqrt2k}), \quad c^k=(k,k,\sqrt2 k) 
\end{equation}
and consider the parametrizations (see Figure~\ref{fig:flip2})
\begin{equation}
\Gamma_k:= \left( \big[ -a_1^k  \big| -a_2^k  \big| -a_3^k \big]^T, c^k \right) \in  \cN_{(3,3,1)}, \quad \Theta_k := \left( \big[ a_1^k  \big| a_2^k  \big| a_3^k
\big]^T, c^k \right) \in  \cN_{(3,3,1)}.
\end{equation}
Then for every $k\in\N$ and every $\Phi_k\in\cN_{(3,3,1)}$ with $\cR(\Phi_k)=\cR(\Theta_k)$ it holds that
\begin{equation}
   |\cR(\Phi_k)-\cR(\Gamma_k)|_{\W}=3
   \quad \text{and} \quad  
   \|\Phi_k-\Gamma_k\|_\infty \geq  k.
\end{equation}
\end{example}
 \begin{figure}[!tb]
    \centering
    \begin{minipage}{.45\textwidth}
        \centering
        \begin{tikzpicture}

\begin{groupplot}[
     group style = {group size = 2 by 1},
     point meta max=1.5, 
     point meta min=-1.5,
    ymin = -1.5,
    ymax = 1.5,
    xmin = -1.5, 
    xmax = 1.5, 
    zmin = -1.5, 
    zmax = 1.5,
     view/h=15]

\nextgroupplot[
    width=4cm,
    height=4cm,]

\addplot3[
    opacity=0.8,
    table/row sep=\\,
    patch,
    patch type=polygon,
    vertex count=4,
    patch table ={
        0 1 2 3\\
        0 3 4 5\\
        0 5 6 6\\
        0 6 7 8\\
        0 8 9 10\\
        0 10 1 1\\
    }
]
table {
    x y z\\ 
    0 0 0 \\
    0.75 1.5 0\\
    1.5 1.5 0\\
    1.5 0 0\\
    1.5 -1.5 0\\
    0.75 -1.5 0\\
    -0.75 -1.5 0\\
    -1.5 -1.5 0\\
    -1.5 0 0\\
    -1.5 1.5 0\\
    -0.75 1.5 0\\
};

\addplot3[mark=none,black, line width=1pt,opacity=0.5] coordinates {(-1.5,0,-1.5) (1.5,0,-1.5)};

\addplot3[mark=none,black, line width=1pt,opacity=0.5] coordinates {(-0.75,-1.5,-1.5) (0.75,1.5,-1.5)};

\addplot3[mark=none,black, line width=1pt,opacity=0.5] coordinates {(-0.75,1.5,-1.5) (0.75,-1.5,-1.5)};

\draw[->, black, line width=1pt, opacity=0.5] (axis cs:0,0,-1.5) -- (axis cs:0,1,-1.5);

\draw[->, black, line width=1pt, opacity=0.5] (axis cs:0,0,-1.5) -- (axis cs:1,-0.5,-1.5);

\draw[->, black, line width=1pt, opacity=0.5] (axis cs:0,0,-1.5) -- (axis cs:-1,-0.5,-1.5);

\draw[->, black!50, line width=1pt, opacity=0.5] (axis cs:0,0,-1.5) -- (axis cs:0,-1,-1.5);

\draw[->, black!50, line width=1pt, opacity=0.5] (axis cs:0,0,-1.5) -- (axis cs:-1,0.5,-1.5);

\draw[->, black!50, line width=1pt, opacity=0.5] (axis cs:0,0,-1.5) -- (axis cs:1,0.5,-1.5);

\nextgroupplot[
    width=4cm,
    height=4cm]

\addplot3[
    opacity=0.8,
    table/row sep=\\,
    patch,
    patch type=polygon,
    vertex count=4,
    patch table ={
        0 1 2 3\\
        3 4 5 0\\
    }
]
table {
    x y z\\
    0 1.5 0\\
    1.5 1.5 0.5\\
    1.5 -1.5 0.5\\
    0 -1.5 0\\
    -1.5 -1.5 0\\
    -1.5 1.5 0\\
};

\addplot3[mark=none,black, line width=1pt,opacity=0.5] coordinates {(0,-1.5,-1.5) (0,1.5,-1.5)};

\draw[->, black, line width=1pt, opacity=0.5] (axis cs:0,0,-1.5) -- (axis cs:1,0,-1.5);

\end{groupplot}
\end{tikzpicture}
        \caption{Shows $\cR(\Gamma)$ and $g_3$ ($a_1=(1,-\tfrac{1}{2})$, $a_2=(-1,-\tfrac{1}{2})$, $a_3=(0,1)$, $v=(1,0)$).}
        \label{fig:flip1}
    \end{minipage}
    \hspace{2em}
    \begin{minipage}{0.45\textwidth}
        \centering
        \begin{tikzpicture}

\begin{axis}[
     height = 4cm,
     width = 4cm,
     ymin = -2.5,
     ymax = 2.5,
     xmin = -2.5, 
     xmax = 2.5, 
     zmin = -2.5, 
     zmax = 2.5,
     point meta max=1.5, 
     point meta min=-1.5,
     view/h=15]

\draw[->, black!50, line width=1pt] (axis cs:0,0,0) -- (axis cs:2,2,0.5);
\draw[->, black!50, line width=1pt] (axis cs:0,0,0) -- (axis cs:-2,2,0.5);
\draw[->, black!50, line width=1pt] (axis cs:0,0,0) -- (axis cs:0,-2.83,0.35);

\draw[->, black, line width=1pt] (axis cs:0,0,0) -- (axis cs:-2,-2,-0.5);
\draw[->, black, line width=1pt] (axis cs:0,0,0) -- (axis cs:2,-2,-0.5);
\draw[->, black, line width=1pt] (axis cs:0,0,0) -- (axis cs:0,2.83,-0.35);

\end{axis}
\end{tikzpicture}
        \caption{Shows the weight vectors 
          of $\Theta_2$ (grey) and $\Gamma_2$ (black).}
        \label{fig:flip2}
    \end{minipage}
\end{figure}
Note that $\Gamma$ and $\Theta$ need to have multiple exactly opposite weight vectors which add to something small (compared to the size of the individual vectors), but not zero, since otherwise reparametrization would be possible (see Lemma~\ref{lem:exact}).
\section{Inverse stability for two-layer {ReLU} Networks}\label{sec:main}
We now establish an inverse stability result using assumptions designed to exclude the pathologies from the previous section. First we present a rather technical theorem for output dimension one which considers a parametrization $\Gamma$ in the unrestricted parametrization space $\cN_N$ and a function $g$ in the the corresponding function space $\cR(\cN_N)$. The aim is to use assumptions which are as weak as possible, while allowing us to find a parametrization $\Phi$ of $g$, whose distance to $\Gamma$ can be bounded relative to $|g-\cR( \Gamma)|_{\W}$.
We then continue by defining a restricted parametrization space $\cN_N^*$, for which we get uniform inverse stability (meaning that we get the same estimate for every $\Gamma\in\cN_N^*$).
\begin{theorem}[Inverse stability at $\Gamma\in\cN_N$] \label{thm:main}
   Let $d,m\in\N$, $N:=(d,m,1)$, $\beta\in[0,\infty)$, let $\Gamma= \Big(\big[ a_1^\Gamma \big| \dots \big| a_m^\Gamma \big]^T, c^\Gamma \Big)\in\cN_{N}$, $g\in\cR(\cN_N)$, and let $I^\Gamma:=\{i\in\m\colon a^\Gamma_i\neq 0\}$.\\
   Assume that the following conditions are satisfied:
   \begin{enumerate}[label=C.\arabic*]
   \item\label{Cond1} 
   It holds for all $i\in\m$ with $\|c_i^\Gamma a_i^\Gamma\|_\infty\leq 2|g-\cR(\Gamma)|_{\W}$ that
   $|c_i^\Gamma|,\|a_i^\Gamma\|_\infty\leq \beta$. 
   \item\label{Cond2} 
   It holds for all $i,j\in I^{\Gamma}$ with $i\neq j$ that $\frac{a_j^\Gamma}{\|a_j^\Gamma\|_\infty}\neq\frac{a_i^\Gamma}{\|a_i^\Gamma\|_\infty}$.
   \item There exists a parametrization $\Theta= \Big(  \big[ a_1^\Theta \big| \dots \big| a_m^\Theta \big]^T, c^\Theta \Big)\in\cN_N$ such that $\cR(\Theta)=g$ and 
   \begin{enumerate}\label{Cond3}
    \item\label{Cond3b}  
    it holds for all $i,j\in I^\Gamma$ with $i\neq j$ that 
    $ 
    \frac{a_j^\Gamma}{\|a_j^\Gamma\|_\infty}\neq-\frac{a_i^\Gamma}{\|a_i^\Gamma\|_\infty}
    $ 
    and for all $i,j\in I^{\Theta}$ with $i\neq j$ that $\frac{a_j^\Theta}{\|a_j^\Theta\|_\infty}\neq-\frac{a_i^\Theta}{\|a_i^\Theta\|_\infty}$, 
    \item\label{Cond3a} 
    it holds for all $i\in I^{\Gamma}$, $j\in I^{\Theta}$ that
    $
        \frac{a_i^\Gamma}{\|a_i^\Gamma\|_\infty}\neq -\frac{a_j^\Theta}{\|a_j^\Theta\|_\infty}
    $
    \end{enumerate}
    where $I^\Theta:=\{i\in\m\colon a^\Theta_i\neq 0\}$.
   \end{enumerate}
   Then there exists a parametrization $\Phi\in\cN_N$ with 
   \begin{align}\label{main_result}
   \cR(\Phi)=g \quad \text{and} \quad \|\Phi-\Gamma\|_\infty\leq \beta+2|g-\cR(\Gamma)|^{\frac{1}{2}}_{\W}.
   \end{align}
\end{theorem}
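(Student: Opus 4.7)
To prove the theorem I construct the desired $\Phi$ by reparametrizing the given $\Theta$ so that its neurons match, one by one, with those of $\Gamma$. Writing $\eps := |g - \cR(\Gamma)|_\W$, for each $i\in I^\Gamma$ I let $J_i := \{j \in I^\Theta : a_j^\Theta/\|a_j^\Theta\|_\infty = a_i^\Gamma/\|a_i^\Gamma\|_\infty\}$. By C.2 the sets $J_i$ are pairwise disjoint; since every $a_j^\Theta$ with $j\in J_i$ is a positive scalar multiple $\mu_j^i a_i^\Gamma$ of $a_i^\Gamma$, the combined ridge from $J_i$ equals $\sigma_i\,\rho(\<a_i^\Gamma,x\>)$ with $\sigma_i := \sum_{j \in J_i} c_j^\Theta\mu_j^i$. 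The neurons of $\Theta$ lying in no $J_i$ I further group by direction into \emph{unmatched} aggregates $(v_k,\tau_k)$ with $\|v_k\|_\infty=1$; the within-$\Theta$ part of C.3 ensures no two such groups share a hyperplane, and the between-$\Gamma$-and-$\Theta$ part of C.3 ensures that no unmatched hyperplane is a hyperplane of $\Gamma$.

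\textbf{Jump bounds from $\eps$.} Both $D\cR(\Gamma)$ and $Dg$ are piecewise constant on the open cells cut out by their hyperplanes of non-differentiability. Pick a point on $H_i := \{x:\<a_i^\Gamma,x\>=0\}$ that lies on no other such hyperplane; by C.2 and both parts of C.3 no other hyperplane coincides with $H_i$, so generic points exist. Crossing $H_i$ transversally, only neuron $i$ of $\Gamma$ flips, and exactly the neurons in $J_i$ of $\Theta$ flip, so the jump of $D(g-\cR(\Gamma))$ there equals $(\sigma_i-c_i^\Gamma)(a_i^\Gamma)^T$. Since $\|D(g-\cR(\Gamma))\|_{L^\infty}=\eps$, the triangle inequality yields
\begin{equation*}
   |\sigma_i-c_i^\Gamma|\,\|a_i^\Gamma\|_\infty \le 2\eps \qquad (i\in I^\Gamma).
\end{equation*}
The same argument across the hyperplane normal to an unmatched $v_k$---across which $\Gamma$ has no jump---yields $|\tau_k|\le 2\eps$.

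\textbf{Assembling $\Phi$.} I build $\Phi$ slot by slot, using the positive homogeneity $c\rho(\<a,x\>)=(c/\lambda)\rho(\<\lambda a,x\>)$ for $\lambda>0$ to rebalance each ridge. For $i\in I^\Gamma$ with $J_i\neq\emptyset$ I set $\Phi_i:=(\lambda_i a_i^\Gamma,\sigma_i/\lambda_i)$, thereby exactly reproducing $\sigma_i\rho(\<a_i^\Gamma,x\>)$. In the \emph{large} case $\|c_i^\Gamma a_i^\Gamma\|_\infty>2\eps$ the jump bound yields $|\sigma_i-c_i^\Gamma|<|c_i^\Gamma|$, hence $\sgn(\sigma_i)=\sgn(c_i^\Gamma)$, and I pick $\lambda_i=1$ when $\|a_i^\Gamma\|_\infty\ge\sqrt{\eps}$ and $\lambda_i=\sigma_i/c_i^\Gamma$ otherwise; either sub-case gives $\|\Phi_i-\Gamma_i\|_\infty\le 2\sqrt{\eps}$. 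In the \emph{small} case $\|c_i^\Gamma a_i^\Gamma\|_\infty\le 2\eps$, condition C.1 gives $\|\Gamma_i\|_\infty\le\beta$ and the jump bound yields $|\sigma_i|\|a_i^\Gamma\|_\infty\le 4\eps$, so the balanced choice $\lambda_i=\sqrt{|\sigma_i|/\|a_i^\Gamma\|_\infty}$ makes both parameters of $\Phi_i$ of magnitude at most $2\sqrt{\eps}$, hence $\|\Phi_i-\Gamma_i\|_\infty\le\beta+2\sqrt{\eps}$. The remaining slots ($i\notin I^\Gamma$ or $J_i=\emptyset$, in which case $\|\Gamma_i\|_\infty\le\beta$ by C.1) are filled with unmatched aggregates $(v_k,\tau_k)$ rebalanced so both parameters have magnitude $\le\sqrt{2\eps}$, or are set to $0$; in every sub-case $\|\Phi_i-\Gamma_i\|_\infty\le\beta+2\sqrt{\eps}$. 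A direct count, using that the number of distinct directions in $I^\Theta$ is at most $|I^\Theta|\le m$, ensures that enough slots are available.

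\textbf{Main obstacle.} The crux of the argument is the jump analysis: one must rigorously verify that in a neighbourhood of a generic point of each of the relevant hyperplanes exactly the claimed neurons of $\Gamma$ and $\Theta$ contribute to the discontinuity of $D(g-\cR(\Gamma))$, so that the jump can be isolated and controlled by $\eps$. Conditions C.2 and C.3 are designed precisely so that any two hyperplanes from $\Gamma\cup\Theta$ that are not identified by the $J_i$-aggregation meet in codimension at least two, making the ``generic point'' arguments valid. Once the jump inequalities are established, the combinatorial matching and the homogeneity-based rebalancing are mechanical. Finally, $\cR(\Phi)=g$ is automatic: summing the matched aggregates (one per slot $i\in I^\Gamma$ with $J_i\neq\emptyset$) and the unmatched aggregates reconstructs $\sum_{j\in I^\Theta}c_j^\Theta\rho(\<a_j^\Theta,x\>)=\cR(\Theta)=g$.
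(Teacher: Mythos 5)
Your proposal is correct and follows essentially the same route as the paper's proof: aggregate the parallel neurons of $\Theta$ by direction, use a hyperplane-crossing argument at generic points (made valid by C.2 and C.3) to obtain the jump bounds $|\sigma_i-c_i^\Gamma|\,\|a_i^\Gamma\|_\infty\le 2\eps$ and $|\tau_k|\le 2\eps$, and then exploit positive homogeneity to rebalance each matched or unmatched ridge, with the large/small dichotomy on $\|c_i^\Gamma a_i^\Gamma\|_\infty$ and Condition C.1 supplying the $\beta$ term. The only deviations are cosmetic (your rebalancing case split uses $\|a_i^\Gamma\|_\infty$ versus $\sqrt{\eps}$ where the paper compares $|c_i^\Gamma|$ with $\|a_i^\Gamma\|_\infty$, and degenerate subcases such as $\sigma_i=0$ need the obvious separate treatment).
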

The proof can be found in Appendix~\ref{sec:AppProof3}.
Note that each of the conditions in the theorem above corresponds directly to one of the pathologies in Section~\ref{sec:SobolevPath}. Condition~\ref{Cond1}, which deals with unbalancedness, only imposes an restriction on the weight pairs whose product is small compared to the distance of $\cR(\Gamma)$ and $g$. As can be guessed from Example~\ref{Ex1} and seen in the proof of Theorem~\ref{thm:main}, such a balancedness assumption is in fact only needed to deal with degenerate cases, where $\cR(\Gamma)$ and $g$ have parts with mismatching directions of negligible magnitude. Otherwise a matching reparametrization is always possible.
Note that a balanced $\Gamma$ (i.e.\@ $|c^\Gamma_i|=\|a^\Gamma_i\|_\infty$) satisfies Condition~\ref{Cond1} with $\beta= (2|g-\cR(\Gamma)|_{\W})^{1/2}$.\\
It is also possible to relax the balancedness assumption by only requiring $|c^\Gamma_i|$ and $\|\Gamma_i\|_\infty$ to be close to $\|c^\Gamma_i a^\Gamma_i\|^{1/2}_\infty$, which would still give a similar estimate but with a worse exponent. In order to see that requiring balancedness does not restrict the space of realizations, observe that the ReLU is positively homogeneous (i.e.\@ $\rho(\lambda x)=\lambda\rho(x)$ for all $\lambda\geq0$, $x\in\R$). Thus balancedness can always be achieved simply by rescaling.\\
Condition~\ref{Cond2} requires $\Gamma$ to have no redundant directions, the necessity of which is demonstrated by Example~\ref{Ex2}. 
Note that prohibiting redundant directions does not restrict the space of realizations, see~\eqref{eq:redundant} in the appendix for details.
From a practical point of view, enforcing this condition could be achieved by a regularization term using a barrier function. Alternatively on could employ a non-standard approach of combining such redundant neurons by changing one of them according to~\eqref{eq:redundant} and either setting the other one to zero or removing it entirely\footnote{This could be of interest in the design of dynamic network architectures~\cite{liu2018darts,MIIKKULAINEN2019293,zoph2018learning} and is also closely related to the co-adaption of neurons, to counteract which, dropout was invented~\cite{hinton2012improving}.}.\\ From a theoretical perspective the first two conditions are rather mild, in the sense that they only restrict the space of parametrizations and not the corresponding space of realizations. Specifically we can define the restricted parametrization space
\begin{align}\begin{split}
    \cN'_{(d,m,D)}:=\{\Gamma\in\cN_{(d,m,D)}\colon
    \|c^\Gamma_i\|_\infty=\|a^\Gamma_i\|_\infty\text{ for all } i\in\m\text{ and $\Gamma$ satisfies \ref{Cond2}}\}
\end{split}\end{align}
for which we have $\cR(\cN'_N)=\cR(\cN_N)$. Note that the above definition as well as the following definition and theorem are for networks with arbitrary output dimensions, as the balancedness condition makes this extension rather straightforward.\\
In order to satisfy Conditions~\ref{Cond3b} and~\ref{Cond3a} we need to restrict the parametrization space in a way which also restricts the corresponding space of realizations. One possibility to do so is the following approach, which also incorporates the previous restrictions as well as the transition to networks without biases. 
\begin{definition}[Restricted parametrization space] \label{def:restr_par}
Let $N=(d,m,D)\in\N^3$. We define
\begin{align}\label{def_Nstar}
    \cN_{N}^*:=\left\{\Gamma\in\cN'_{N}\colon (a^{\Gamma}_i)_{d-1}, (a^{\Gamma}_i)_d>0\ \text{ for all}\ i\in\m\right\}.
\end{align}
\end{definition}
While we no longer have $\cR(\cN^*_N)=\cR(\cN_N)$, Lemma~\ref{lem:Tech2} shows that for every
${\Theta\in\cP_{(d,m,D)}}$ there exists $\Gamma\in\cN^*_{(d+2,m+1,D)}$ such that for all $x\in\R^d$ it holds that 
\begin{align}
  \cR(\Gamma)(x_1,\dots,x_d,1,-1)=\cR(\Theta)(x_1,\dots,x_d).
\end{align}
In particular, this means that for any optimization problem over an unrestricted parametrization space $\cP_{(d,m,D)}$, there is a corresponding optimization problem over the parametrization space $\cN^*_{(d+2,m+1,D)}$ whose solution is at least as good (see Corollary~\ref{cor:inform}). 
Our main result now states that for such a restricted parametrization space we have uniform $(4,1/2)$ inverse stability w.r.t.\@ $|\cdot|_{\W}$, a proof of which can be found in Appendix~\ref{sec:AppProof3}.
\begin{theorem}[Inverse stability on $\cN_N^*$] \label{cor:main}
   Let $N\in\N^3$. For all $\Gamma\in\cN_N^*$ and $g\in\cR(\cN_N^*)$ there exists a parametrization $\Phi\in\cN_N^*$ with
   \begin{align} 
   \cR(\Phi)=g \quad \text{and} \quad \|\Phi-\Gamma\|_\infty\leq 4|g-\cR(\Gamma)|_{\W}^{\frac{1}{2}}.
   \end{align}
\end{theorem}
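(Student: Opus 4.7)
The plan is to derive Theorem~\ref{cor:main} from the more technical Theorem~\ref{thm:main} by verifying that any $\Gamma\in\cN_N^*$ together with any $g\in\cR(\cN_N^*)$ automatically fulfills Conditions~\ref{Cond1}--\ref{Cond3}, and by choosing the parameter $\beta$ so as to exploit the balancedness built into $\cN_N^*$. First I would dispose of Conditions~\ref{Cond2}--\ref{Cond3}: Condition~\ref{Cond2} holds because $\cN_N^*\subseteq\cN'_N$. To verify Condition~\ref{Cond3} I would pick any $\Theta\in\cN_N^*$ realizing $g$ (which exists because $g\in\cR(\cN_N^*)$) and observe that the sign requirement $(a_i)_{d-1},(a_i)_d>0$ in the definition of $\cN_N^*$ immediately rules out pairs of oppositely directed weight vectors, both within $\Gamma$, within $\Theta$, and across the two, because oppositeness would force some coordinate to become negative. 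This yields both~\ref{Cond3b} and~\ref{Cond3a} simultaneously. The same positivity also implies $a_i^\Gamma,a_i^\Theta\neq 0$, so $I^\Gamma=I^\Theta=\m$.

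For Condition~\ref{Cond1} I would use the balancedness identity $\|c_i^\Gamma\|_\infty=\|a_i^\Gamma\|_\infty$, which gives $\|c_i^\Gamma a_i^\Gamma\|_\infty=\|c_i^\Gamma\|_\infty\|a_i^\Gamma\|_\infty$. Hence, for every index $i$ with $\|c_i^\Gamma a_i^\Gamma\|_\infty\le 2|g-\cR(\Gamma)|_{\W}$ one has
\begin{equation*}
\|c_i^\Gamma\|_\infty,\,\|a_i^\Gamma\|_\infty\le\sqrt{2\,|g-\cR(\Gamma)|_{\W}},
\end{equation*}
so Condition~\ref{Cond1} is met with $\beta:=\sqrt{2\,|g-\cR(\Gamma)|_{\W}}$. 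Plugging this $\beta$ into the conclusion~\eqref{main_result} of Theorem~\ref{thm:main} yields
\begin{equation*}
\|\Phi-\Gamma\|_\infty\le\beta+2|g-\cR(\Gamma)|_{\W}^{1/2}=(\sqrt 2+2)\,|g-\cR(\Gamma)|_{\W}^{1/2}\le 4\,|g-\cR(\Gamma)|_{\W}^{1/2},
\end{equation*}
which is the estimate claimed in Theorem~\ref{cor:main}.

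The main obstacle I anticipate is twofold. First, Theorem~\ref{thm:main} is stated for output dimension one, whereas Theorem~\ref{cor:main} allows arbitrary $D\in\N$; this will require running the same argument with $c_i$ now being vectors in $\R^D$ and interpreting $c_i a_i^T$ as a matrix with the entrywise maximum norm (consistent with~\eqref{eq:s_semi}). The paper already flags this extension as straightforward thanks to the balancedness convention $\|c_i\|_\infty=\|a_i\|_\infty$. Second, Theorem~\ref{thm:main} only guarantees $\Phi\in\cN_N$, whereas we need $\Phi\in\cN_N^*$, i.e.\@ balanced, free of redundant directions, and with $(a_i^\Phi)_{d-1},(a_i^\Phi)_d>0$. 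Balancedness can be post-processed by rescaling each pair $(c_i^\Phi,a_i^\Phi)$ via positive homogeneity of $\rho$, and redundant directions can be merged via the identity discussed after Theorem~\ref{thm:main}; both operations can be performed without increasing the distance to $\Gamma$ beyond the stated bound, using that $\Gamma$ itself is balanced and nonredundant. The positivity of the last two coordinates of each $a_i^\Phi$, however, must be inherited from the construction inside the proof of Theorem~\ref{thm:main}, which combines neurons coming from $\Gamma$ and from the chosen $\Theta\in\cN_N^*$. Since every weight vector of either parent already satisfies the sign condition, any convex-like combination (with nonnegative coefficients) preserves it; checking that the construction used to prove Theorem~\ref{thm:main} only produces such combinations is the step that cannot be treated as a black box and is where I would expect most of the work to lie.
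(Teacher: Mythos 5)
Your reduction is sound in outline and is, at bottom, the same strategy the paper uses: the choice $\beta=(2|g-\cR(\Gamma)|_{\W})^{1/2}$ for balanced $\Gamma$ is exactly the remark the paper makes after Theorem~\ref{thm:main}, your verification of Conditions~\ref{Cond2}--\ref{Cond3} from $\cN_N^*\subseteq\cN_N'$ and from the sign constraint $(a_i)_{d-1},(a_i)_d>0$ is correct, and $(\sqrt2+2)r^{1/2}\le 4r^{1/2}$ matches the paper's constant. The difference is in execution: the paper does \emph{not} invoke Theorem~\ref{thm:main} as a black box but reruns its proof, precisely because of the two obstacles you name. For the issue of landing in $\cN_N^*$, the paper's resolution is cleaner than the post-processing you sketch: since both $\Gamma$ and the chosen $\Theta\in\cN_N^*$ are balanced and have pairwise non-parallel weight vectors, \emph{every} reparametrization step of the original proof (the merging in~\eqref{eq:replace1}--\eqref{eq:replace2}, the rebalancing~\eqref{repara2}, and the case distinctions~\eqref{eq:ReparaA}--\eqref{eq:ReparaB3}) becomes unnecessary, so the final $\Phi$ is simply $\Theta$ with its neurons permuted and hence lies in $\cN_N^*$ automatically --- no merging of redundant directions or a posteriori rebalancing is needed (and your claim that such post-processing would not increase the distance to $\Gamma$ is not justified as stated, though it turns out to be moot). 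Cases B.1 and B.2 are vacuous for balanced $\Gamma$, and the paper replaces B.3 by a direct two-sided estimate on the scalars $\lambda_i^\Gamma,\lambda_i^\Theta$ with $a_i^\Gamma=\lambda_i^\Gamma e_i$, $a_i^\Theta=\lambda_i^\Theta e_i$, which is also what makes the extension to output dimension $D>1$ go through, using $\|c_ia_i\|_\infty=\|c_i\|_\infty\|a_i\|_\infty$ for the rank-one matrices $c_ia_i$. So the one step you explicitly leave open --- checking that the construction inside Theorem~\ref{thm:main} produces a parametrization in $\cN_N^*$ --- is indeed where the remaining work lies, and it closes not by showing the construction ``preserves'' the sign condition but by showing that under balancedness the construction degenerates to a pure permutation of $\Theta$.
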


\section{Outlook} \label{sec:outlook}

This contribution investigates the potential insights which may be gained from studying the optimization problem over the space of realizations, as well as the difficulties encountered when trying to connect it to the parametrized problem. While Theorem~\ref{thm:optPara} and Theorem~\ref{cor:main} offer some compelling preliminary answers, there are multiple ways in which they can be extended.\\
To obtain our inverse stability result for shallow ReLU networks we studied sums of ridge functions. Extending this result to deep ReLU networks requires understanding their behaviour under composition. In particular, we have ridge functions which vanish on some half space, i.e.\@ colloquially speaking each neuron may \enquote{discard half the information} it receives from the previous layer.
This introduces a new type of degeneracy, which one will have to deal with.\\
Another interesting direction is an extension to inverse stability w.r.t.\@ some weaker norm like $\|\cdot\|_{L^\infty}$ or a fractional Sobolev norm under stronger restrictions on the space of parametrizations
 (see Lemma~\ref{lem:grad_est_tech} for a simple approach using very strong restrictions).\\
Lastly, note that Theorem~\ref{thm:optPara} is not specific to the ReLU activation function and thus also incentivizes the study of inverse stability for any other activation function.\\
From an applied point of view, Conditions~\ref{Cond1}-\ref{Cond3} motivate the implementation of corresponding regularization (i.e.\@ penalizing unbalancedness and redundancy in the sense of parallel weight vectors) in state-of-the-art networks, in order to explore whether preventing inverse stability leads to improved performance in practice. Note that there already are results using, e.g.\@ \emph{cosine similarity}, as regularizer to prevent parallel weight vectors~\cite{Bansal18,rodriguez2016regularizing} as well as approaches, called \emph{Sobolev Training}, reporting better generalization and data-efficiency by employing a Sobolev norm based loss~\cite{czarnecki2017sobolev}.
\section*{Acknowledgment}
The research of JB and DE was supported by the Austrian Science Fund (FWF) under
grants I3403-N32 and P 30148. The authors would like to thank Pavol Har\'{a}r for helpful comments. 
\bibliography{bib}
\newpage
\appendix
\section{Appendix - Proofs and Additional Material} 

\subsection{Section~\ref{sec:Sec1}} 
\subsubsection{Additional Material} \label{sec:AppAdd1}
\begin{example}[Without inverse stability: parameter minimum $\notimplies$ realization minimum] \label{ex:loc}
Consider the two domains
\begin{equation} 
    D_1:=\{ (x_1,x_2)\in (-1,1)^2 \colon x_2 > |x_1| \}, \quad 
    D_2 :=\{ (x_1,x_2) \in (-1,1)^2 \colon x_1 > |x_2| \}.
\end{equation}
For simplicity of presentation, assume we are given two samples $x^1\in D_1$, $x^2\in D_2$ with labels $y^1=0$, $y^2=1$. The corresponding MSE is
\begin{equation}
    \mathcal{L}(g)=\tfrac{1}{2}\big((g(x^1))^2+(g(x^2)-1)^2\big)
\end{equation} 
for every $g\in C(\R^2,\R)$.
Let the zero realization be parametrized by\footnote{See notation in the beginning of Section~\ref{sec:degen}.}
\begin{equation}
    \Gamma_*=(0,(-1,0))\in\cN_{(2,1,1)}
\end{equation}
with loss $\mathcal{L}(\cR(\Gamma_*))=\tfrac{1}{2}$.
Note that changing each weight by less than $\tfrac{1}{2}$ does not decrease the loss, as this rotates the vector $(-1,0)$ by at most $45^{\circ}$. Thus $\Gamma_*$ is a local minimum in the parametrization space. 
However, the sequence of realizations given by
\begin{equation}
    g_k(x)=\tfrac{1}{k} \ReLU (x_1-x_2)=\cR((1,-1),\tfrac{1}{k})
\end{equation}
satisfies that
\begin{equation}
    \| g_k - \cR(\Gamma_*)\|_{W^{1,\infty}((-1,1)^2)} = \| g_k \|_{W^{1,\infty}((-1,1)^2)} \le \tfrac{1}{k}  
\end{equation}
and
\begin{equation}
   \mathcal{L}(g_k)=\tfrac{1}{2}(g_k(x^2)-1)^2 < \tfrac{1}{2} = \mathcal{L}(\cR(\Gamma_*)),
\end{equation}
see Figure~\ref{fig:local}.
Accordingly, $\cR(\Gamma_*)$ is \emph{not} a local minimum in the realization space even w.r.t.\@ the Sobolev norm. The problem occurs, since inverse stability fails due to unbalancedness of $\Gamma_*$.
\begin{figure}[!htb]
    \centering
    \begin{tikzpicture}

\begin{groupplot}[
     group style = {group size = 2 by 1},
     ymin = -1.5,
     ymax = 1.5,
     xmin = -1.5, 
     xmax = 1.5, 
     zmin = -1.5, 
     zmax = 1.5,
     point meta max=1.5, 
     point meta min=-1.5,
     view/h=15]

\nextgroupplot[
    width=5cm,
    height=5cm]

\addplot3[mark=none, dashed, black, line width=0.5pt] coordinates {(0.8, 0.5, -1.5) (0.8, 0.5, 1)};
\addplot3[mark=none, dashed, black, line width=0.5pt] coordinates {(-0.4, 0.6, -1.5) (-0.4, 0.6, 0)};

\addplot3[
    opacity=0.8,
    table/row sep=\\,
    patch,
    patch type=polygon,
    vertex count=4,
    patch table ={
        0 1 2 3\\
        3 4 5 0\\
    }
]
table {
    x y z\\
    0 1.5 0\\
    1.5 1.5 0\\
    1.5 -1.5 0\\
    0 -1.5 0\\
    -1.5 -1.5 0\\
    -1.5 1.5 0\\
};

\draw[->, line width=1pt, opacity=0.5] (axis cs:0,0,-1.5) -- (axis cs:-1,0,-1.5);

\draw[dotted, line width=0.8pt, opacity=0.5]
    (axis cs:0,0,-1.5) node {}
 -- (axis cs:-1,1,-1.5) node {}
 -- (axis cs:1,1,-1.5) node {};
 
\draw[dotted, line width=0.8pt, opacity=0.5]
    (axis cs:0,0,-1.5) node {}
 -- (axis cs:1,-1,-1.5) node {}
 -- (axis cs:1,1,-1.5) node {}
 -- cycle;

\addplot3[mark=none, black, line width=1pt, opacity=0.5] coordinates {(0,-1.5,-1.5) (0,1.5,-1.5)};

\addplot3[scatter, only marks, nodes near coords*={$(x^2,y^2)$}]
coordinates{
(0.8, 0.5, 1) };
\addplot3[scatter, nodes near coords*={$(x^1,y^1)$}]
coordinates{
(-0.4, 0.6, 0) };

\nextgroupplot[
    width=5cm,
    height=5cm]

\addplot3[mark=none, dashed, black, line width=0.5pt] coordinates {(0.8, 0.5, -1.5) (0.8, 0.5, 1)};
\addplot3[mark=none, dashed, black, line width=0.5pt] coordinates {(-0.4, 0.6, -1.5) (-0.4, 0.6, 0)};

\addplot3[
    opacity=0.8,
    table/row sep=\\,
    patch,
    patch type=polygon,
    vertex count=3,
    patch table ={
        0 1 2\\
        2 3 0\\
    }
]
table {
    x y z\\
    1.5 1.5 0\\
    1.5 -1.5 1\\
    -1.5 -1.5 0\\
    -1.5 1.5 0\\
};

\draw[->, line width=1pt, opacity=0.5] (axis cs:0,0,-1.5) -- (axis cs:1,-1,-1.5);

\draw[dotted, line width=0.8pt, opacity=0.5]
    (axis cs:0,0,-1.5) node {}
 -- (axis cs:-1,1,-1.5) node {}
 -- (axis cs:1,1,-1.5) node {};
 
\draw[dotted, line width=0.8pt, opacity=0.5]
    (axis cs:0,0,-1.5) node {}
 -- (axis cs:1,-1,-1.5) node {}
 -- (axis cs:1,1,-1.5) node {}
 -- cycle;

\addplot3[mark=none, black, line width=1pt, opacity=0.5] coordinates {(-1.5,-1.5,-1.5) (1.5,1.5,-1.5)};

\addplot3[scatter, only marks, nodes near coords*={$(x^2,y^2)$}]
coordinates{
(0.8, 0.5, 1) };

\addplot3[scatter, nodes near coords*={$(x^1,y^1)$}]
coordinates{
(-0.4, 0.6, 0) };

\end{groupplot}
\end{tikzpicture}
    \caption{The figure shows the samples $((x^i,y^i))_{i=1,2}$, the realization $\cR(\Gamma_*)$ of the local parameter minimum (left) and $g_3$ (right).}
    \label{fig:local}
\end{figure}
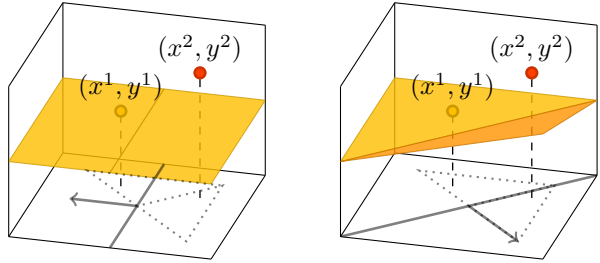
\end{example}

\begin{theorem}[Quality of local realization minima] \label{thm:opt}
Assume that
\begin{equation}\label{eq:approx}
    \supp_{f\in S}\ \infp_{\Phi\in\Omega_N}\|\cR(\Phi)-f\| < \eta \quad \text{(approximability)}.
\end{equation}
Let $g_*$ be a local minimum with radius $r'\geq 2\eta$ of the optimization problem $\min_{g\in\cR(\Omega_N)}\L(g)$. Then it holds for every $g\in\cR(\Omega_N)$ (in particular for every global minimizer) that
\begin{equation}\label{realization}
    \L(g_*)\leq \L(g) +\tfrac{2c}{r'}\|g_*-g\|\eta.
\end{equation}
\end{theorem}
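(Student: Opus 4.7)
The plan is a convex-combination argument between $g_*$ and an arbitrary competitor $g \in \cR(\Omega_N)$, exploiting convexity of $\L$ on $S$, its Lipschitz continuity, and the approximability hypothesis. First I would note that $S=\{\Lambda\le C\}$ is convex as a sublevel set of the quasi-convex functional $\Lambda$, and that $g_*,g\in\cR(\Omega_N)\subseteq S$, so the segment $h_\lambda := (1-\lambda) g_* + \lambda g$ lies in $S$ for every $\lambda \in [0,1]$, even though it generally leaves $\cR(\Omega_N)$. By the approximability hypothesis, for each such $\lambda$ one may pick a ``projection'' $\tilde h_\lambda \in \cR(\Omega_N)$ with $\|\tilde h_\lambda - h_\lambda\| < \eta$.

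The key idea is to choose $\lambda$ so that $\tilde h_\lambda$ is admissible for the local-minimum comparison, i.e.\@ so that $\|\tilde h_\lambda - g_*\| \leq r'$. The triangle inequality gives $\|\tilde h_\lambda - g_*\| \leq \eta + \lambda \|g - g_*\|$, so any $\lambda \leq (r' - \eta)/\|g - g_*\|$ works, and the hypothesis $r' \geq 2\eta$ guarantees $r' - \eta \geq r'/2 > 0$, leaving room for a nontrivial choice. For such a $\lambda$, chaining the local-minimum property, $c$-Lipschitzness of $\L$ on $S$, and convexity of $\L$ on $S$ yields
\begin{equation*}
\L(g_*) \leq \L(\tilde h_\lambda) \leq \L(h_\lambda) + c\eta \leq (1-\lambda)\L(g_*) + \lambda \L(g) + c\eta,
\end{equation*}
which rearranges to $\L(g_*) \leq \L(g) + c\eta/\lambda$.

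It then remains to optimize in $\lambda$. If $\|g - g_*\| \leq r'$ one can skip the approximation step entirely and use $\tilde h := g \in \cR(\Omega_N)$, obtaining $\L(g_*) \leq \L(g)$ directly, which is stronger than the claim. Otherwise, setting $\lambda = (r' - \eta)/\|g - g_*\| \in (0,1)$ and invoking $r' - \eta \geq r'/2$ produces $c\eta/\lambda = c\eta\|g-g_*\|/(r'-\eta) \leq (2c\eta/r')\|g_* - g\|$, which is the claimed bound. I do not anticipate a serious obstacle: once the correct convex combination and approximation are set up the rest is a clean rearrangement, and the hypothesis $r' \geq 2\eta$ is precisely what keeps $r' - \eta$ comparable to $r'$ so as to give the clean factor $2c/r'$ in the final estimate.
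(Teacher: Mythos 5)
Your proposal is correct and follows essentially the same route as the paper's proof: form the convex combination $f=(1-\lambda)g_*+\lambda g\in S$, approximate it to within $\eta$ by some $\cR(\Phi)\in\cR(\Omega_N)$ landing inside the radius-$r'$ neighbourhood of $g_*$, and chain the local-minimum property, $c$-Lipschitzness, and convexity of $\L$; the paper simply fixes $\lambda=\tfrac{r'}{2\|g-g_*\|}$ where you take $\lambda=\tfrac{r'-\eta}{\|g-g_*\|}$, and both choices yield the stated bound. Your explicit treatment of the case $\|g-g_*\|\le r'$ (where the convex-combination parameter would exceed $1$) is a small point the paper's proof glosses over, and it is handled correctly.
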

\begin{proof}
Define $\lambda:=\frac{r'}{2\|g-g_*\|}$ and $f:=(1-\lambda)g_*+\lambda g\in S$. Due to \eqref{eq:approx} there is $\Phi\in\Omega_N$ such that $\|\cR(\Phi)-f\|\leq\eta$ and by the assumptions on $g_*$ and $\L$ it holds that
\begin{equation*}
    \L(g_*)\leq \L(\cR(\Phi))\leq \L(f) +c\eta \le (1-\lambda) \L(g_*) + \lambda  \L(g)+c\eta.
\end{equation*}
This completes the proof. See Figure~\ref{fig:opt} for illustration.
\begin{figure}[!htb]
    \centering
    \begin{tikzpicture}
\def\r{3}
\def\o{9}
\def\hs{2}
\def\hr{2.5}
\def\ho{0.5}
\def\hf{1.34375}
\def\sx{4.4}
\def\sy{-1.7}
\draw[thick,->] (0,0,0) -- (0,3.5,0) node[above]{\small$\mathcal{L}$};

\begin{scope}[canvas is plane={O(0,0,0)x(1,0,0)y(0,0,1)},blue!70]
\coordinate (middle) at (\r,0);
\coordinate (opt) at (\o,0);
\coordinate (shift) at (\sx,\sy);
\coordinate (outer) at (0.8786,-2.121);
\draw[fill, opacity=0.05] (5.5,0) ellipse (5.5cm and 4.5cm);
\draw[dashed,opacity=0.5] (middle) circle (\r);
\draw[dashed,opacity=0.5] (middle) circle (\r*0.5);
\node at (10,-1.5) {\small $S$};
\draw[fill] (opt) circle (2pt) node[below right] {\small $g$};
\draw[<->, shorten <=0.1cm, shorten >=0.1cm] (middle) -- node[right,yshift=-6pt, xshift=-8pt] {\small $r'$} (outer);
\draw[fill] (middle)+(\r*0.5,0) circle (2pt) node[below right] {\small $f$};
\draw[fill] (shift) circle (2pt) node[right] {\small$\cR(\Phi)$};
\draw[fill] (middle) circle (2pt) node[below right] {\small$g_*$};
\draw[black, <->, shorten <=0.1cm, shorten >=0.1cm] (middle)+(\r*0.5,0) -- node[right,yshift=-2pt] {\small $\eta$} (shift);
\end{scope}

\begin{scope}[canvas is plane={O(\r,0,0)x(\sx,0,\sy)y(\r,1,0)},orange!70]
\draw[black] (1,0) -- (1,\hr);
\draw (-.5,2.125) parabola bend (0,2) (1.5,3.125);
\fill[opacity=0.1] (0,2) parabola (1,2.5) |- (0,0);
\end{scope}
\begin{scope}[canvas is plane={O(0,0,0)x(1,0,0)y(0,1,0)},red!70]
\draw[black] (3,0) -- (3,2);
\draw[black] (9,0) -- (9,0.5);
\draw[dashed,thick] (\r,\hs) -- (\o,\ho);
\draw[black] (1.5*\r,0) -- (1.5*\r,\hf);
\draw (2.5,2.26042) parabola bend (9,0.5) (9.5,0.51042);
\fill[opacity=0.1] (9,0.5) parabola (3,2) |- (9,0);
\end{scope}

\path (\r,\hs,0) node[circle, fill, inner sep=1]{};
\path (\sx,\hr,\sy) node[circle, fill, inner sep=1]{};
\path (\r,\hs,0) node[circle, fill, inner sep=1]{};
\path (1.5*\r,\hf,0) node[circle, fill, inner sep=1]{};
\path (\o,\ho,0) node[circle, fill, inner sep=1]{};

\draw[thick] (-3pt,\ho) -- (3pt,\ho) node[left=6pt] {\small$\mathcal{L}(g)$};

\draw[thick] (-3pt,\hf) -- (3pt,\hf) node[left=6pt] (f) {\small$\mathcal{L}(f)$};

\draw[thick] (-3pt,\hs) -- (3pt,\hs) node[left=6pt] {\small$\mathcal{L}(g_*)$};

\draw[thick] (-3pt,\hr) -- (3pt,\hr) node[left=6pt] (real) {\small$\mathcal{L}(\cR \Phi)$};

\draw[black, <->, shorten <=0.1cm, shorten >=0.1cm] (6pt,\hf) -- node[right] {\small $c\eta$} (6pt,\hr);

\end{tikzpicture}
    \caption{The figure illustrates the proof idea of Theorem~\ref{thm:opt}. Note that decreasing $\eta$, $c$, $\|g_*- g\|$ or increasing $r'$ leads to a better local minimum due to the convexity of the loss function (red).}
    \label{fig:opt}
\end{figure}
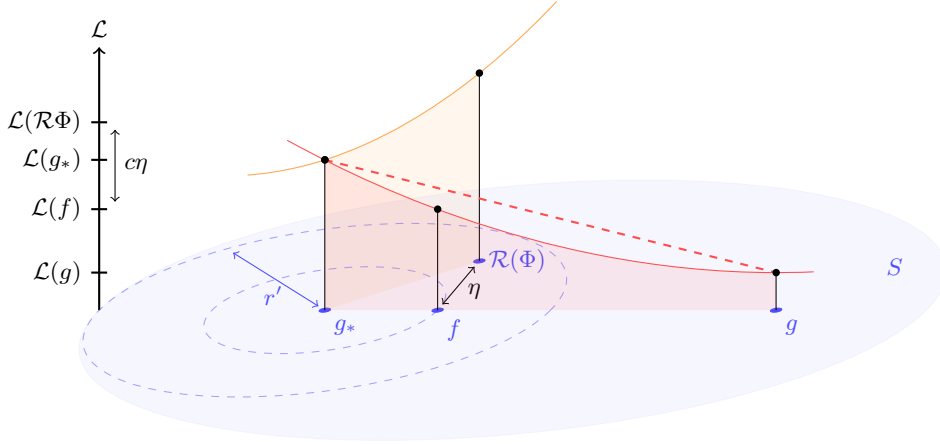
\end{proof}

\subsubsection{Proofs} \label{sec:AppProof1}
\begin{proof}[Proof of Proposition~\ref{prop:local}]
By Definition~\ref{def:InverseStability} we know that for every $g\in \cR(\Omega)$ with $\|g-\cR(\Gamma_*)\| \le (\frac{r}{s})^{1/\alpha} $ there exists $\Phi\in\Omega$ with
\begin{equation}
    \cR(\Phi)=g \quad \text{and} \quad \| \Phi -\Gamma_* \|_\infty \le s\|g- \cR(\Gamma_*)\|^\alpha\le r.
\end{equation}
Therefore by assumption it holds that
\begin{equation}
    \L(\cR(\Gamma_*))\leq\L(\cR(\Phi))=\L(g).
\end{equation}
which proves the claim.
\end{proof}

\begin{proof}[Proof of Theorem~\ref{thm:optPara}]
Let $\eps,r>0$, define $r':=(\tfrac{r}{s})^{1/\alpha}$ and $\eta:=\min\{(\tfrac{2c}{r'}\operatorname{diam}(S))^{-1}\eps,\tfrac{r'}{2}\}$. Then compactness of $S$ implies the existence of an architecture $n(\eps,r)\in\cA$ such that for every $N\in\cA$ with $N_1\ge n_1(\eps,r),\dots,N_{L-1}\ge n_{L-1}(\eps,r)$ the approximability assumption~\eqref{eq:approx} is satisfied. Let now $\Gamma_*$ be a local minimum with radius at least $r$ of $\min_{\Gamma\in\Omega_N}\L(\cR(\Gamma))$.
As we assume uniform $(s,\alpha)$ inverse stability,  Proposition~\ref{prop:local} implies that $\cR(\Gamma_*)$ is a local minimum of the optimization problem $\min_{g\in\cR(\Omega_N)}\L(g)$ with radius at least $r'=(\tfrac{r}{s})^{1/\alpha}\ge 2\eta$. Theorem~\ref{thm:opt} establishes the claim.
\end{proof}

\begin{proof}[Proof of Corollary~\ref{cor:inform}]
We simply combine the main observations from our paper. First, note that the assumptions imply that the restricted parametrization space $\Omega$, which we are optimizing over, is the space $\cN_{(d+2,N_1+1,D)}^*$ from Definition~\ref{def:restr_par}. Secondly, Theorem~\ref{cor:main} implies that the realization map is $(4,1/2)$ inverse stable on $\Omega$. Thus, Proposition~\ref{prop:local} directly proves Claim~\ref{it:col_first}. For the proof of Claim~\ref{it:col_sec} we make use of Lemma~\ref{lem:Tech2}.
It implies that for every $\Theta\in \cP_{(d,N_1,D)}$ there exists $\Gamma\in \Omega$ such that it holds that 
\begin{align}
   \tfrac{1}{n}\sum_{i=1}^n \|\cR(\Gamma)(\tilde{x}^i)-y^i\|^2 =  \tfrac{1}{n}\sum_{i=1}^n \|\cR(\Theta)(x^i)-y^i\|^2,
\end{align}
which proves the claim.
\end{proof}

\subsection{Section~\ref{sec:degen}} 
\subsubsection{Additional Material} \label{sec:AppAdd2}
\begin{lemma}[Reparametrization in case of linearly independent weight vectors]
\label{lem:triv}
Let  
\begin{equation}
    \Theta=(A^{\Theta},C^{\Theta})=\big([a_1^{\Theta} |\dots |a_m^{\Theta} ]^T, [c_1^{\Theta}| \dots | c_m^{\Theta} ]\big) \in\cN_{(d,m,D)}
\end{equation}
with linearly independent weight vectors $(a_i^{\Theta})_{i=1}^m$ and $\min_{i\in[m]} \|c_i^{\Theta}\|_\infty > 0$ and let 
\begin{equation}
    \Phi=(A^\Phi,B^\Phi)=\big([a_1^\Phi |\dots |a_m^\Phi ]^T, [c_1^\Phi| \dots | c_m^\Phi ]\big)\in\cN_{(d,m,D)}
\end{equation}
with $\cR(\Phi)=\cR(\Theta)$. Then there exists a permutation $\pi\colon [m]\to [m]$ such that for every $i\in[m]$ there exist $\lambda_i\in(0,\infty)$ with
\begin{equation}
    a_i^\Phi=\lambda_i a_{\pi(i)}^{\Theta} \quad \text{and} \quad c_i^\Phi =\tfrac{1}{\lambda_i}c_{\pi(i)}^{\Theta}.
\end{equation}
This means that, up to reordering and rebalancing, $\Theta$ is the unique parametrization of $\cR(\Theta)$.
\end{lemma}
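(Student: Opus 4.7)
The plan is to recover the weight directions of $\Phi$ from the singularity hyperplane structure of $\cR(\Theta)=\cR(\Phi)$ and then to match coefficients to pin down the output weights and positive scalings. The central tool is the jump of the gradient of a sum of ReLU ridge functions across its hyperplanes of non-differentiability.

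First I would observe that $\cR(\Theta)$ is piecewise linear with $D\cR(\Theta)$ smooth away from the union of the hyperplanes $H_i^\Theta := \{x \in \R^d: \langle a_i^\Theta, x\rangle = 0\}$. Because $(a_i^\Theta)_{i=1}^m$ are linearly independent, no two of the $a_i^\Theta$ are scalar multiples of one another, so the $H_i^\Theta$ are $m$ pairwise distinct hyperplanes; moreover the gradient jump of $\cR(\Theta)$ across $H_i^\Theta$ in the direction of $a_i^\Theta$ equals $c_i^\Theta (a_i^\Theta)^T$, which is nonzero since $\|c_i^\Theta\|_\infty>0$ and $a_i^\Theta\neq 0$. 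Because $\cR(\Phi)=\cR(\Theta)$, the same nonzero jump must be produced by $D\cR(\Phi)$ at a generic point of each $H_i^\Theta$, so at least one $j\in\m$ must satisfy $a_j^\Phi=\mu\,a_i^\Theta$ for some $\mu\neq 0$ with $c_j^\Phi\neq 0$. A \emph{spurious} $\Phi$-neuron (one whose direction is not collinear with any $a_i^\Theta$) would create a singularity hyperplane absent from $\cR(\Theta)$ unless it is paired with another $\Phi$-neuron on the same line whose jump cancels it. A pigeon-hole count then closes the step: the $m$ distinct $\Theta$-lines each demand at least one $\Phi$-neuron, every spurious line costs at least two, and $\Phi$ has only $m$ neurons in total, so there are no spurious neurons, no null neurons, and a well-defined bijection $\pi\colon \m \to \m$ with $a_j^\Phi$ collinear with $a_{\pi(j)}^\Theta$ and $a_j^\Phi,c_j^\Phi$ both nonzero for every $j$.

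Writing $a_j^\Phi=\mu_j a_{\pi(j)}^\Theta$ with $\mu_j\neq 0$ and using positive homogeneity of $\rho$ together with the identity $\rho(-t)=\rho(t)-t$, one then obtains
\begin{equation*}
  \cR(\Phi)(x) \;=\; \sum_{j=1}^m |\mu_j|\,c_j^\Phi \,\rho(\langle a_{\pi(j)}^\Theta, x\rangle) \;-\; \sum_{j\,:\,\mu_j<0} |\mu_j|\,c_j^\Phi \,\langle a_{\pi(j)}^\Theta, x\rangle.
\end{equation*}
Comparing with $\cR(\Theta)(x)=\sum_i c_i^\Theta \rho(\langle a_i^\Theta,x\rangle)$, and invoking linear independence of the ridge functions $\rho(\langle a_i^\Theta,\cdot\rangle)$ from one another and from all linear functions (which follows from the distinct singularity hyperplanes of these ridge functions together with the smoothness of linear maps), matching the ReLU parts forces $|\mu_j|c_j^\Phi = c_{\pi(j)}^\Theta$, while the residual linear contribution demands $\sum_{j\,:\,\mu_j<0} c_{\pi(j)}^\Theta (a_{\pi(j)}^\Theta)^T = 0$. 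Linear independence of $(a_i^\Theta)_{i=1}^m$ combined with $c_{\pi(j)}^\Theta\neq 0$ then forces $\{j:\mu_j<0\}=\emptyset$, after which $\lambda_j:=\mu_j>0$ yields $a_j^\Phi=\lambda_j a_{\pi(j)}^\Theta$ and $c_j^\Phi=\tfrac{1}{\lambda_j}c_{\pi(j)}^\Theta$, as claimed. The main obstacle is the pigeon-hole step ruling out spurious neurons: executing it cleanly requires careful tracking of which $\Phi$-neurons contribute to the jump at each singularity hyperplane and making precise that any cancellation on a spurious line necessarily consumes at least two of $\Phi$'s $m$ neurons, while the rest of the argument is essentially bookkeeping.
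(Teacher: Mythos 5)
Your argument is correct, but it takes a genuinely different route from the paper's. The paper exploits linear independence globally: since $A^\Theta$ is surjective, all $2^m$ sign patterns are realized on the nonempty open orthant preimages $H^s$, the derivatives $C^\Theta\operatorname{diag}(s)A^\Theta$ on these regions are pairwise distinct (because $A^\Theta$ has a right inverse and no $c_i^\Theta$ vanishes), and matching the linear-region structure of $\cR(\Phi)$ then forces $A^\Phi=\operatorname{diag}(\lambda)P^TA^\Theta$ with positive $\lambda_i$ in one stroke. You instead argue locally via the gradient jump $c_i^\Theta (a_i^\Theta)^T$ across each singularity hyperplane, use a pigeonhole count to rule out spurious and null neurons and extract the bijection $\pi$ with $a_j^\Phi=\mu_j a_{\pi(j)}^\Theta$, $\mu_j\neq 0$, and only then eliminate negative $\mu_j$ via $\rho(-t)=\rho(t)-t$ and the vanishing of the residual linear part. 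A nice feature of your decomposition is that it isolates exactly where linear independence (rather than mere pairwise non-collinearity) is indispensable, namely in killing the linear residual $\sum_{j:\mu_j<0}c_{\pi(j)}^\Theta(a_{\pi(j)}^\Theta)^T=0$; this connects directly to the paper's Lemma~\ref{lem:exact} and Example~\ref{Ex3}, where opposite directions summing to zero do admit nontrivial reparametrizations. The paper's orthant argument is shorter once surjectivity is in hand, but its ``matching linear regions'' step is left implicit, whereas your jump-plus-pigeonhole step makes the corresponding bookkeeping explicit; both require comparable care, and your sketch of that step is sound as stated.
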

\begin{proof}
First we define for every $s\in\{0,1\}^m$ the corresponding open orthant 
\begin{equation}
    O^s:=\{x\in\R^m\colon x_1(2s_1-1)> 0, \dots, x_m(2s_m-1)> 0 \} \subseteq\R^m.
\end{equation}
By assumption $A^\Theta$ has rank $m$, i.e.\@ is surjective, and therefore the preimages of the orthants 
\begin{equation}
    H^s:=\{  x\in\R^d \colon A^\Theta x\in O^s\}\subseteq \R^d, \quad s\in\{0,1 \}^m,
\end{equation}
are disjoint, non-empty open sets.
Note that on each $H^s$ the realization $\cR(\Theta)$ is linear with
\begin{equation}\label{eq:proof_der}
   \cR(\Theta)(x)=C^\Theta\operatorname{diag}(s)A^\Theta x 
   \quad \text{and} \quad D\cR(\Theta)(x)= C^\Theta\operatorname{diag}(s)A^\Theta.
\end{equation}
Since $A^\Theta$ has full row rank, it has a right inverse.
Thus we have for $s,t\in\{0,1\}^m$ that 
\begin{align}
   C^\Theta\operatorname{diag}(s)A^\Theta=C^\Theta\operatorname{diag}(t)A^\Theta
   \implies
   C^\Theta\operatorname{diag}(s)=C^\Theta\operatorname{diag}(t).
\end{align}
Note that $ C^\Theta\operatorname{diag}(s)=C^\Theta\operatorname{diag}(t)$ can only hold if $s=t$ due to the assumptions that $\|c^\Theta_i\|_\infty \neq 0$ for all $i\in[m]$.
Thus the above establishes that for $s,t\in\{0,1\}^m$ it holds that
\begin{equation}
    C^\Theta\operatorname{diag}(s)A^\Theta=C^\Theta\operatorname{diag}(t)A^\Theta \quad \text{if and only if} \quad  s=t,
\end{equation}
i.e.\@ $\cR(\Theta)$ has different derivatives on its $2^m$ linear regions.
In order for $\cR(\Phi)$ to have matching linear regions and matching derivatives on each one of them, there must exist a permutation matrix $P\in\{0,1\}^{m\times m}$ such that for every $s\in\{0,1\}^m$ 
\begin{equation}
    PA^\Phi x\in O^s \quad \text{for every } x\in H^s.
\end{equation}

Thus, there exist $(\lambda_i)_{i=1}^{m}\in (0,\infty)^m$ such that
\begin{equation}
    A^{\Phi}=\operatorname{diag}(\lambda_1,\dots,\lambda_m)P^TA^\Theta.
\end{equation}
The assumption that $D\cR(\Theta)=D\cR(\Psi)$, together with~\eqref{eq:proof_der} for $s=(1,\dots,1)$, implies that
\begin{equation}
    C^{\Phi}=C^\Theta P \operatorname{diag}(\tfrac{1}{\lambda_1}    ,\dots,\tfrac{1}{\lambda_m}),
\end{equation}
which proves the claim.
\end{proof}

\begin{example}[Failure due to unbalancedness]\label{Ex1b}
Let 
\begin{align}
    \Gamma_k:=\big((k,0),\tfrac{1}{k^2}\big)\in\cN_{(2,1,1)}, \quad k\in\N,
\end{align}
and $g_k\in\cR(\cN_{(2,1,1)})$ be given by
\begin{align}
    g_k(x)=\tfrac{1}{k}\rho(\<(0,1),x\>), \quad k\in\N.
\end{align}
The only way to parametrize $g_k$ is $g_k(x)=\cR(\Phi_{k})(x)=c\rho(\langle (0,a),x\rangle )$ with $a,c>0$ (see Lemma~\ref{lem:triv}), and we have
\begin{equation}
   |\cR(\Phi_k)-\cR(\Gamma_k)|_{\W}\leq\tfrac{1}{k}
   \quad \text{and} \quad  
   \|\Phi_k-\Gamma_k\|_\infty \geq k.
\end{equation}
\end{example}

\begin{lemma}
\label{lem:exact}
Let $d,m\in\N$ and $a_i\in\R^d$, $i\in\m$, such that $\sum_{i\in\m}a_i=0$. Then it holds for all $x\in\R^d$ that
\begin{align}
    \sum_{i\in\m}\rho(\<a_i,x\>)=\sum_{i\in\m}\rho(\<-a_i,x\>).
\end{align}
\end{lemma}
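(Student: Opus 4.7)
The plan is to exploit the identity $\rho(t)-\rho(-t)=t$, which holds for every $t\in\R$ since $\rho(t)=\max(t,0)$ means exactly one of $\rho(t),\rho(-t)$ equals $|t|$ and the other is zero (with both vanishing at $t=0$). Applying this pointwise to $t=\<a_i,x\>$ converts the claimed identity into a statement about linearity in $a_i$, which is then immediate from the hypothesis $\sum_{i\in\m}a_i=0$.

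Concretely, I would first fix an arbitrary $x\in\R^d$ and write
\begin{equation}
    \sum_{i\in\m}\rho(\<a_i,x\>)-\sum_{i\in\m}\rho(\<-a_i,x\>)=\sum_{i\in\m}\bigl[\rho(\<a_i,x\>)-\rho(\<-a_i,x\>)\bigr].
\end{equation}
Then I would invoke the ReLU identity $\rho(t)-\rho(-t)=t$ with $t=\<a_i,x\>$ to rewrite each summand as $\<a_i,x\>$, giving
\begin{equation}
    \sum_{i\in\m}\bigl[\rho(\<a_i,x\>)-\rho(\<-a_i,x\>)\bigr]=\sum_{i\in\m}\<a_i,x\>=\Bigl\<\sum_{i\in\m}a_i,\,x\Bigr\>=0,
\end{equation}
where the last equality uses the assumption $\sum_{i\in\m}a_i=0$. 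Rearranging yields the claim.

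There is essentially no obstacle here; the only thing worth double-checking is the ReLU identity $\rho(t)-\rho(-t)=t$, which can be verified by splitting into the cases $t\geq 0$ (where $\rho(t)=t$ and $\rho(-t)=0$) and $t<0$ (where $\rho(t)=0$ and $\rho(-t)=-t$). Since the argument is pointwise in $x$ and uses only a one-line linear-algebraic cancellation, no further structural assumptions on the $a_i$ (such as pairwise linear independence) are needed — the sole ingredient is that their sum vanishes, which is exactly what allows a parametrization of the zero function of the form used in Example~\ref{Ex3}.
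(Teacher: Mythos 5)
Your proof is correct and is essentially the same as the paper's: the paper's manipulation of splitting $\sum_{i}\<a_i,x\>=0$ into the parts with $\<a_i,x\>\geq 0$ and $\<a_i,x\>\leq 0$ is exactly the termwise application of the identity $\rho(t)-\rho(-t)=t$ that you use. No gaps.
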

\begin{proof}
By assumption we have for all $x\in\R^d$ that $\sum_{i\in\m}\<a_i,x\>=0$. This implies for all $x\in\R^d$ that
\begin{align}
    \sum_{i\in\m\colon\<a_i,x\>\geq 0}\<a_i,x\>-\sum_{i\in\m}\<a_i,x\>=\sum_{i\in\m\colon\<a_i,x\>\leq 0}-\<a_i,x\>,
\end{align}
which proves the claim.
\end{proof}

\subsubsection{Proofs} \label{sec:AppProof2}

\begin{proof}[Proof of Example~\ref{Ex0}]
  We have for every $k\in\N$ that
  \begin{equation} \label{eq:diverge}
      \| g_k \|_{L^\infty((-1,1)^2)} \le \tfrac{1}{k} \quad \text{and} \quad |g_k|_{W^{1,\infty}} = k^2.
  \end{equation}
  Assume that there exists sequence of networks $(\Phi_k)_{k \in \N}\subseteq \cN_{(2,2,1)}$ with $\cR(\Phi_k)=g_k$
  and with uniformly bounded parameters, i.e.\@ $ \sup_{k\in\N} \|\Phi_k\|_\infty <\infty$. Note that there exists a constant $C$ (depending only on the network architecture) such that the realizations $\cR(\Phi_k)$ 
  are Lipschitz continuous with
  \begin{equation*}
      \operatorname{Lip}(\cR(\Phi_k)) \le C\|\Phi_k\|_\infty^2
  \end{equation*}
  (see~\cite[Prop. 5.1]{Petersen2018TopologicalSize}).
  It follows that $|\cR(\Phi_k)|_{W^{1,\infty}}\le \operatorname{Lip}(\cR(\Phi_k))$ is uniformly bounded which contradicts~\eqref{eq:diverge}.
\end{proof} 

\begin{proof}[Proof of Example~\ref{Ex1}]
The only way to parametrize $g_k$ is $g_k(x)=\cR(\Phi_{k})(x)=c\rho(\langle (0,a),x\rangle )$ with $a,c>0$ (see also Lemma~\ref{lem:triv}), which proves the claim. 
\end{proof}

\begin{proof}[Proof of Example~\ref{Ex2}]
Any parametrization of $g_k$ must be of the form $\Phi_k := (A,c)\in\R^{2 \times 2}\times\R^{1 \times 2}$ with
  \begin{equation}
      A= \begin{bmatrix} a_1 & 0 \\ 0 & a_2 \end{bmatrix} \quad \text{or} \quad 
      A= \begin{bmatrix} 0 & a_2 \\ a_1 & 0 \end{bmatrix}
  \end{equation}
  (see Lemma~\ref{lem:triv}).
  Thus it holds that $\|\Phi_k-\Gamma\|_\infty \geq \| (1,0) -(0,a_2)\|_\infty \geq 1$ and the proof is completed by direct calculation.
\end{proof}

\begin{proof}[Proof of Example~\ref{Ex3}]
Let $\Phi_k$ be an arbitrary parametrization of $g_k$ given by 
\begin{equation}
    \Phi_k = \left( [ \tilde{a}_1  | \tilde{a}_2  | \dots| \tilde{a}_{2m}]^T, \tilde{c}\right) \in \cN_{(d,2m,1)}
    \end{equation}
As $g_k$ has two linear regions separated by the hyperplane with normal vector $v$, there exists $j\in [2m]$ and $\lambda\in\R\setminus \{0\}$ such that 
\begin{equation}
    \tilde{a}_j= \lambda v.
\end{equation}
The distance of any weight vector $\pm a_i$ of $\Gamma$ to the line $\{\lambda v \colon \lambda \in \R\}$ can be lower bounded by
\begin{equation} \label{eq:dist}
 \| \pm a_i - \lambda v \|^2_\infty   \ge \tfrac{1}{d} \| \pm a_i - \lambda v \|_2^2 \ge \tfrac{1}{d^2} \left[ \|a_i\|^2_2\|v\|^2_2 - \langle a_i, v \rangle ^2\right] , \quad i\in[m], \lambda\in\R.
\end{equation}
The Cauchy-Schwarz inequality and the linear independence of $v$ to each $a_i$, $i\in[m]$, establishes that $C:=\tfrac{1}{d^2}\min_{i\in [m]}\left[ \|a_i\|^2_2\|v\|^2_2 - \langle a_i, v \rangle ^2\right] > 0$. 
Together with the fact that $\cR(\Gamma)=0$, this completes the proof.
\end{proof}

\begin{proof}[Proof of Example~\ref{Ex4}]
Since $x=\ReLU(x)-\ReLU(-x)$ for every $x\in\R$, the difference of the realizations is linear, i.e.
\begin{equation}
\cR(\Theta_k)-\cR(\Gamma_k)= \langle c_1^k a_1^k+c_2^k a_2^k+c_3^k a_3^k, x \rangle = \langle (0,0,3), x \rangle
\end{equation}
and thus the difference of the gradients is constant, i.e.
\begin{equation}
    |\cR(\Theta_k)-\cR(\Gamma_k)|_{\W}= 3, \quad k\in\N.
\end{equation}
However, regardless of the balancing and reordering of the weight vectors $a_i^k$, $i\in[3]$, we have that
\begin{equation}
   \|\Theta_k-\Gamma_k\|_\infty \geq  k.
\end{equation}
By Lemma~\ref{lem:triv}, up to balancing and reordering, there does not exist any other parametrization of $\Theta_k$ with the same realization. 
\end{proof}

\subsection{Section~\ref{sec:main}}
\subsubsection{Additional Material} \label{sec:AppAdd3}

\begin{lemma}\label{lem:Tech2}
Let $d,m,D\in\N$ and $\Theta\in\cP_{(d,m,D)}$. Then there exists $\Gamma\in\cN^*_{(d+2,m+1,D)}$ such that for all $x\in\R^d$ it holds that 
\begin{align}
  \cR(\Gamma)(x_1,\dots,x_d,1,-1)=\cR(\Theta)(x).
\end{align}
\end{lemma}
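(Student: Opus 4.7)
The plan is to build $\Gamma$ by two simultaneous augmentations: use the two extra input coordinates $(1,-1)$ together with strictly positive first-layer weights to absorb the hidden-layer biases of $\Theta$, and use the extra $(m{+}1)$-th hidden neuron to absorb the output bias. Write $\Theta=((A_1,b_1),(A_2,b_2))\in\cP_{(d,m,D)}$ with rows $a^{(1)}_i\in\R^d$ of $A_1$, entries $b_{1,i}$ of $b_1$, and columns $c^{(2)}_i\in\R^D$ of $A_2$. For each $i\in\m$ I first set
$$\tilde a_i := (a^{(1)}_i,\,b_{1,i}+t_i,\,t_i)\in\R^{d+2}, \qquad \tilde c_i := c^{(2)}_i,$$
where $t_i>\max(0,-b_{1,i})$ is a parameter to be chosen later. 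Substituting $\tilde x=(x_1,\dots,x_d,1,-1)$ gives $\langle\tilde a_i,\tilde x\rangle=\langle a^{(1)}_i,x\rangle+b_{1,i}$, so $\tilde c_i\,\rho(\langle\tilde a_i,\tilde x\rangle)$ reproduces the $i$-th neuron of $\Theta$. For the extra neuron I take $\tilde a_{m+1}=(0,\dots,0,\alpha,\beta)$ with $\alpha>\beta>0$ and $\tilde c_{m+1}=(\alpha-\beta)^{-1}b_2$, so that $\tilde c_{m+1}\,\rho(\langle\tilde a_{m+1},\tilde x\rangle)=b_2$. Summing over all $i$ yields $\cR(\Gamma)(\tilde x)=\cR(\Theta)(x)$.

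Next I enforce the three properties defining $\cN^*_{(d+2,m+1,D)}$. Positivity of the last two coordinates of every $\tilde a_i$ is immediate from the choice $t_i,\alpha,\beta>0$. Balancedness $\|\tilde c_i\|_\infty=\|\tilde a_i\|_\infty$ follows from the positive homogeneity of $\rho$: rescaling $(\tilde a_i,\tilde c_i)\mapsto(\lambda_i\tilde a_i,\lambda_i^{-1}\tilde c_i)$ with $\lambda_i=(\|\tilde c_i\|_\infty/\|\tilde a_i\|_\infty)^{1/2}$ equalizes the two norms, preserves the realization, and keeps the last two coordinates strictly positive.

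The only nontrivial step, and the step I expect to be the main obstacle, is Condition~\ref{Cond2} — that the normalized nonzero weight vectors be pairwise distinct — and this is where the freedom in the parameters $(t_i,\alpha,\beta)$ is used. For each pair $i\neq j$, the set of parameter tuples for which $\tilde a_i$ and $\tilde a_j$ are positive scalar multiples of one another is cut out by a finite system of polynomial equations of positive codimension in the parameter space; hence a generic choice of $t_1,\dots,t_m,\alpha,\beta$ subject only to the open lower bounds above avoids all finitely many such pairwise degeneracies at once. A final corner case to address is when some $c^{(2)}_i$ (or $b_2$) vanishes: balancedness would then force $\|\tilde a_i\|_\infty=0$, violating positivity of the last two coordinates. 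The remedy is to discard the bias-absorption formula for that neuron and instead set $\tilde a_i=(0,\dots,0,\alpha_i,\beta_i)$ with $0<\alpha_i<\beta_i$, so that $\langle\tilde a_i,\tilde x\rangle<0$ and $\rho$ vanishes on the input; the neuron then contributes nothing no matter what nonzero $\tilde c_i$ is chosen for balancing, and the $\alpha_i,\beta_i$ provide additional generic freedom to maintain the non-parallelism condition.
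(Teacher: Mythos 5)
Your construction is essentially the paper's: encode each hidden bias $b_{1,i}$ in two strictly positive extra coordinates whose difference, paired with the fixed inputs $(1,-1)$, reproduces $b_{1,i}$ (the paper takes $(b_i^+,b_i^-)$ with $b_i^+-b_i^-=b_i$ and $b_i^\pm>0$, you take $(b_{1,i}+t_i,\,t_i)$); absorb the output bias into the $(m{+}1)$-th neuron whose augmented inner product is a positive constant; rebalance via positive homogeneity of $\rho$; and special-case neurons that would otherwise force a zero weight pair. The functional identity, balancedness, and positivity of the last two coordinates all check out.

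The one place your argument has a hole is the genericity claim for Condition~\ref{Cond2}. Working out when $\tilde a_i$ and $\tilde a_j$ are positive multiples of one another: the conditions $t_i=\lambda t_j$ and $b_{1,i}+t_i=\lambda(b_{1,j}+t_j)$ force $\lambda=t_i/t_j$ and $(a^{(1)}_i,b_{1,i})=\lambda\,(a^{(1)}_j,b_{1,j})$, so for any pair with $(a^{(1)}_j,b_{1,j})\neq(0,0)$ the ratio $t_i/t_j$ is pinned to a single value and the bad set is indeed a hyperplane — there your codimension argument is correct. But if $\Theta$ has two hidden neurons $i\neq j$ with $a^{(1)}_i=a^{(1)}_j=0$ and $b_{1,i}=b_{1,j}=0$, then $\tilde a_i=(0,\dots,0,t_i,t_i)$ and $\tilde a_j=(0,\dots,0,t_j,t_j)$ are positively parallel for \emph{every} admissible $t_i,t_j>0$: the degeneracy locus is the whole parameter space, not a positive-codimension subset, so no generic choice avoids it. The fix is already contained in your proof: such neurons contribute $c^{(2)}_i\rho(0)=0$ on the slice $\{(x,1,-1)\}$, so they should be routed through the same discard remedy as the $c^{(2)}_i=0$ neurons, i.e.\@ assigned $(0,\dots,0,\alpha_i,\beta_i)$ with $0<\alpha_i<\beta_i$ and pairwise distinct ratios. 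With that one extension your argument is complete — and is in fact more careful than the paper's own proof, which never verifies Condition~\ref{Cond2} at all and, as written, assigns the same vector $(0,\dots,0,1,1)$ to every neuron with $c_i=0$, producing parallel directions whenever there are two of them.
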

\begin{proof}
Since $\Theta\in\cP_{(d,m,D)}$ it can be written as
\begin{align}
    \Theta= \Big( \big( A,b\big), \big(c,e\big) \Big)= \left(\big( [ a_1 | \dots | a_{m} ]^T , b \big),([ c_1 | \dots | c_{m} ],e) \right) 
\end{align}
with 
\begin{align}
    \cR(\Theta)(x)=\sum^m_{i=1} c_i \rho(\< a_i,x\>+b_i)+e, \quad x\in\R^d,
\end{align}
where $A\in\R^{m\times d}$, $b\in\R^m$, $C\in\R^{D\times m}$, and $e\in\R^D$. 
We define for $i\in\m$ 
\begin{align}
    b^+_i:=\begin{cases}b_i+1 & \colon b_i\geq 0\\ 1 & \colon b_i < 0 \end{cases},\quad\text{and}\quad
    b^-_i:=\begin{cases}1 & \colon b_i\geq 0\\ -b_i+1 & \colon b_i < 0 \end{cases}
\end{align}
and observe that $b_i^+>0$, $b_i^->0$, and $b_i^+ - b_i^-=b_i$.
For $i\in\oneto{m}$ let
\begin{align}
    c^*_i:=\begin{cases}c_i & \colon \|c_i\|_\infty\neq 0 \\ (1,\dots,1) & \colon \|c_i\|_\infty=0\end{cases}
\end{align}
and
\begin{align}
    a^*_i:=\begin{cases}(a_{i,1},\dots,a_{i,d},b^+_i,b^-_i) & \colon \|c_i\|_\infty\neq 0 \\ (0,\dots,0,1,1) & \colon \|c_i\|_\infty=0\end{cases}.
\end{align}
Note that we have
\begin{align}
    \cR(\Theta)(x)=\sum^m_{i=1} c^*_i \rho(\< a_i^*,(x_1,\dots,x_d,1,-1)\>)+e, \quad x\in\R^d.
\end{align}
To include the second bias $e$ 
let
\begin{align}
    c^*_{m+1}:=\begin{cases} e & \colon e\neq 0 \\ (1,\dots,1) & \colon e= 0\end{cases},\quad\text{and}\quad a^*_{m+1}:=\begin{cases}(0,\dots,0,2,1) & \colon e\neq 0 \\ (0,\dots,0,1,1) & \colon e = 0\end{cases}.
\end{align}
In order to balance the network, let $a_i^\Gamma=a_i^*(\tfrac{\|c^*_i\|_\infty}{\|a_i^*\|_\infty})^{1/2}$ and $c_i^\Gamma=c_i^*(\tfrac{\|a^*_i\|_\infty}{\|c_i^*\|_\infty})^{1/2}$ for every $i\in[m+1]$. Then the claim follows by direct computation.
\end{proof}

\subsubsection{Proofs} \label{sec:AppProof3}

\begin{proof}[Proof of Theorem~\ref{thm:main}]
    Without loss of generality\footnote{In case one of them is zero, the other one can be set to zero without changing the realization.}, we can assume for all $i\in\m$ that $a^\Theta_i=0$ if and only if $c^\Theta_i=0$. We now need to show that there always exists a way to reparametrize $\cR(\Theta)$ such that the architecture remains the same and \eqref{main_result} is satisfied. For simplicity of notation we will write $r:=|g-\cR(\Gamma)|_{\W}$ throughout the proof. Let $f_i^\Gamma\colon\R^d\to\R$ resp. $f_i^\Theta\colon\R^d\to\R$ be the part that is contributed by the $i$-th neuron, i.e.
   \begin{align}
     &\cR(\Gamma)=\sum_{i=1}^m f^\Gamma_i \quad\text{with}\quad f^\Gamma_i(x):=c^\Gamma_i\rho(\<a^\Gamma_i,x\>),\\
     g=&\cR(\Theta)=\sum_{i=1}^m f^\Theta_i
     \quad\text{with}\quad
     f^\Theta_i(x):=c^\Theta_i\rho(\<a^\Theta_i,x\>).
   \end{align}
   Further let
   \begin{align}\begin{split}
   H_{\Gamma,i}^+&:=\{x\in\R^d\colon \<a^\Gamma_i,x\> > 0\},\\  H_{\Gamma,i}^0&:=\{x\in\R^d\colon \<a^\Gamma_i,x\> = 0\},\\
   H_{\Gamma,i}^-&:=\{x\in\R^d\colon \<a^\Gamma_i,x\> < 0\}.
   \end{split}\end{align}
    By conditions \ref{Cond2} and \ref{Cond3b} we have for all $i,j\in
    I^\Gamma$ that
    \begin{align}\label{eq:hyperplane_assump1}
      i\neq j \implies H^0_{\Gamma,i}\neq H^0_{\Gamma,j}.
   \end{align}
   Further note that we can reparametrize $\cR(\Theta)$ such that the same holds there.
   To this end observe that 
   \begin{align} \label{eq:redundant}
   c\rho(\<a,x\>)+c'\rho(\<a',x\>)=(c+c'\tfrac{\|a'\|_\infty}{\|a\|_\infty})\rho(\<a,x\>),
   \end{align}
   given that $a'$ is a positive multiple of $a$. Specifically, let $(J_k)_{k=1}^K$ be a partition of $I^\Theta$ (i.e.\@ $J_k\neq\emptyset$, $\cup_{k=1}^K J_k=I^\Theta$ and $J_k\cap J_{k'}=\emptyset$ if $k\neq k'$), such that for all $k\in\oneto{K}$ it holds that
   \begin{align}
       i,j\in J_k \implies  \frac{ a_j^\Theta}{\|a^\Theta_j\|_\infty}=\frac{ a^\Theta_i}{\|a^\Theta_i\|_\infty}.
   \end{align}
   We denote by $j_k$ the smallest element in $J_k$ and make the following replacements, for all $i\in I^\Theta$, without changing the realization of $\Theta$:
   \begin{alignat}{2} 
       a^\Theta_i&\mapsto a^\Theta_i, c^\Theta_i\mapsto\sum_{j\in J_k}c^\Theta_j\tfrac{\|a^\Theta_j\|_\infty}{\|a^\Theta_{j_k}\|_\infty}  ,\quad\ &\text{if}\ i\in J_k\ \text{and}\ i=j_k,\label{eq:replace1}\\
       a^\Theta_i&\mapsto 0, c^\Theta_i\mapsto 0,\quad\ &\text{if}\ i\in J_k\ \text{and}\ i\neq j_k. \label{eq:replace2}
   \end{alignat}
   Note that we also update the set $I^\Theta:=\{i\in\m\colon a^\Theta_i\neq 0\}$ accordingly.
   Let now
  \begin{align}\begin{split}
   H_{\Theta,i}^+&:=\{x\in\R^d\colon \<a^\Theta_i,x\> > 0\},\\ H_{\Theta,i}^0&:=\{x\in\R^d\colon \<a^\Theta_i,x\> = 0\},\\
   H_{\Theta,i}^-&:=\{x\in\R^d\colon \<a^\Theta_i,x\> > 0\}.
   \end{split}\end{align}
  By construction and condition~\ref{Cond3b}, we have for all $i,j\in I^\Theta$ that      
    \begin{align}\label{eq:hyperplane_assump2}
      i\neq j \implies H^0_{\Theta,i}\neq H^0_{\Theta,j}.
   \end{align}
   Note that we now have a parametrization $\Theta$ of $g$, where all weight vectors $a_i^\Theta$ are either zero (in which case the corresponding $c_i^\Theta$ are also zero) or pairwise linearly independent to each other nonzero weight vector.\\  
   Next, for $s\in\{0,1\}^{m}$, let
   \begin{align}\begin{split}
    H^s_\Gamma &:=\bigcap_{i\in\m\colon s_i=1}H^+_{\Gamma,i}\cap\bigcap_{i\in\m\colon s_i=0}H^-_{\Gamma,i},\\
    H^s_\Theta &:=\bigcap_{i\in\m\colon s_i=1}H^+_{\Theta,i}\cap\bigcap_{i\in\m\colon s_i=0}H^-_{\Theta,i},
   \end{split}\end{align}
   and  
   	\begin{align}\begin{split}
   		S^\Gamma :=\{s\in\{0,1\}^{m}\colon H^s_\Gamma\neq\emptyset\},\quad 
   		S^\Theta :=\{s\in\{0,1\}^{m}\colon H^s_\Theta\neq\emptyset\}.
	\end{split}\end{align}
	The $H^s_\Gamma$, $s\in S^\Gamma$, and $H^s_\Theta$, $s\in S^\Theta$, are the interiors of the different linear regions of $\cR(\Gamma)$ and $\cR(\Theta)$ respectively.
	 Next observe that the derivatives of $f^\Gamma_i,f^\Theta_i$ are (a.e.) given by 
   \begin{align}\begin{split}
   Df^\Gamma_i(x)=\1_{H^+_{\Gamma,i}}\!\!(x)\,c^\Gamma_i a^\Gamma_i,\quad 
   Df^\Theta_i(x)=\1_{H^+_{\Theta,i}}\!\!(x)\,c^\Theta_i a^\Theta_i. 
   \end{split}\end{align}
   Note that for every $x\in H^s_\Gamma$, $y\in H^s_\Theta$ we have 
   \begin{align}\begin{split}
   	 D\cR(\Gamma)(x)&=\sum_{i\in\m}Df^\Gamma_i(x)
   	 =\sum_{i\in\m}s_i c^\Gamma_i a^\Gamma_i=:\Sigma^\Gamma_s,\\
   	 D\cR(\Theta)(y)&=\sum_{i\in\m}Df^\Theta_i(y)=\sum_{i\in\m}s_i c^\Theta_i a^\Theta_i=:\Sigma^\Theta_s.
   \end{split}\end{align}
   Next we use that for $s\in S^\Gamma$, $t\in S^\Theta$ we have $|\Sigma^\Gamma_s-\Sigma^\Theta_t|\leq r$ if $H^\Gamma_s\cap H^\Theta_t\neq\emptyset$, and compare adjacent linear regions of $\cR(\Gamma)-\cR(\Theta)$.
   Let now $i\in I^\Gamma$ and consider the following cases:\\
   \textbf{Case 1}: We have $H^0_{\Gamma,i}\neq H^0_{\Theta,j}$ for all $j\in I^\Theta$. This means that the $Df_k^\Theta$, $k\in\m$, and the $Df_k^\Gamma$, $k\in\m\backslash\{i\}$, are the same on both sides near the hyperplane $H^0_{\Gamma,i}$, while the value of $Df^\Gamma_i$ is $0$ on one side and $c_i^\Gamma a_i^\Gamma$ on the other. Specifically, there exist $s^+,s^-\in S^\Gamma$ and $s^*\in S^\Theta$ such that $s^+_i=1$, $s^-_i=0$, $s^+_j=s^-_j$ for all $j\in\m\backslash\{i\}$, and $H_\Gamma^{s^+}\cap H_\Theta^{s^*}\neq\emptyset$, $H_\Gamma^{s^-}\cap H_\Theta^{s^*}\neq\emptyset$, which implies
   \begin{align}\label{Case1Est}
   	\| c^\Gamma_i a^\Gamma_i\|_\infty=\|(\Sigma^\Gamma_{s^+}-\Sigma^\Theta_{s^*})-(\Sigma^				\Gamma_{s^-}-\Sigma^\Theta_{s^*})\|_\infty \leq 2r.
	\end{align} 
	\textbf{Case 2}: There exists $j\in I^\Theta$ such that $H^0_{\Gamma,i}=H^0_{\Theta,j}$. Note that \eqref{eq:hyperplane_assump1} ensures that $H^0_{\Gamma,i}\neq H^0_{\Gamma,k}$ for $k\in\oneto{m}\setminus\{i\}$ and \eqref{eq:hyperplane_assump2} ensures that $H^0_{\Theta,j}\neq H^0_{\Gamma,k}$ for $k\in\oneto{m}\setminus\{j\}$. Moreover, Condition~\ref{Cond3a} implies $H^+_{\Gamma,i}=H^+_{\Theta,j}$.
	This means that the $Df_k^\Theta$, $k\in\m\backslash\{j\}$, and the $Df_k^\Gamma$, $k\in\m\backslash\{i\}$, are the same on both sides near the hyperplane $H^0_{\Gamma,i}=H^0_{\Theta,j}$, while the values of $Df^\Gamma_i$ and $Df^\Theta_j$ change. Specifically there exist $s^+,s^-\in S^\Gamma$ and $t^+,t^-\in S^\Theta$ such that $s^+_i=1$, $s^-_i=0$, $s^+_k=s^-_k$ for all $k\in\m\backslash\{i\}$, $t^+_j=1$, $t^-_j=0$, $t^+_k=t^-_k$ for all $k\in\m\backslash\{j\}$ and $H^\Gamma_{s^+}\cap H^\Theta_{t^+}\neq\emptyset$, $H^\Gamma_{s^-}\cap H^\Theta_{t^-}\neq\emptyset$, which implies
      \begin{align}\label{Case2Est}
   	\| c^\Gamma_i a^\Gamma_i-c^\Theta_j a^\Theta_j\|_\infty=\|(\Sigma^\Gamma_{s^+}-\Sigma^\Theta_{t^+})-(\Sigma^				\Gamma_{s^-}-\Sigma^\Theta_{t^-})\|_\infty\leq 2r.
	\end{align}
	Analogously we get for $i\in I^\Theta$ that $H^0_{\Theta,i}\neq H^0_{\Gamma,j}$ for all $j\in I^\Gamma$ implies $\| c^\Theta_i a^\Theta_i\|_\infty\leq 2r$.
Next let
 \begin{align}\label{Ione}
     I_1:=\{i\in\m\colon H^0_{\Gamma,i}\neq H^0_{\Theta,j}\ \text{for all}\ j\in I^\Theta\}\cup\{i\in\m\colon a^\Gamma_i=0\}
 \end{align}
 and
 \begin{align}\label{Itwo}
     I_2:= [m]\setminus I_1=\{i\in\m\colon \exists\ j\in I^\Theta\ \text{such that}\ H^+_{\Gamma,i}=H^+_{\Theta,j}\}.
 \end{align}
 Colloquially speaking, this shows that for every $f^\Gamma_i$ with $i\in I_2$ there is a $f^\Theta_j$ with exactly matching half-spaces, i.e. $H^+_{\Gamma,i}=H^+_{\Theta,j}$, and approximately matching gradients (Case 2). Moreover, all unmatched $f^\Gamma_i$ and $f^\Theta_j$ must have a small gradient (Case 1).\\
Specifically, the above establishes that there exists a permutation $\pi\colon\oneto{m}\to\oneto{m}$ such that for every $i\in I_1$ it holds that
   \begin{align}\label{small_cont}
       \| c^\Gamma_i a^\Gamma_i\|_\infty,\|c^\Theta_{\pi(i)} a^\Theta_{\pi(i)}\|_\infty\leq 2r,
   \end{align}
   and for every $i\in I_2$ that
    \begin{align}\label{eq:difference}
   	\| c^\Gamma_{i}a^\Gamma_{i}-c^\Theta_{\pi(i)} a^\Theta_{\pi(i)}\|_\infty\leq 2r.
	\end{align}
   We make the following replacements, for all $i\in [m]$, without changing the realization of $\Theta$:
   \begin{align}\label{repara1}
       a^\Theta_i\to a^\Theta_{\pi(i)},\quad
       c^\Theta_i\to c^\Theta_{\pi(i)}.
   \end{align}
   In order to balance the weights of $\Theta$ for $I_1$, we further make the following replacements, for all $i\in I_1$ with $a^\Theta_i\neq0$, without changing the realization of $\Theta$: 
      \begin{align}\label{repara2}
       a^\Theta_i\to (\tfrac{|c_i^\Theta|}{\|a_i^\Theta\|_\infty})^{1/2}\ a^\Theta_i,\quad
       c^\Theta_i\to (\tfrac{\|a_i^\Theta\|_\infty}{|c_i^\Theta|})^{1/2}\ c^\Theta_i.
   \end{align}
   This implies for every $i\in I_1$ that
   \begin{align}\label{eq61}
        |c^\Theta_{i}|,  \|a^\Theta_{i}\|_\infty\leq (2r)^{1/2}.
   \end{align}
   Moreover, due to Condition~\ref{Cond1}, we get for every $i\in I_1$ that
   \begin{align}
               |c^\Gamma_{i}|,  \|a^\Gamma_{i}\|_\infty\leq \beta.
   \end{align}
   Thus we get for every $i\in I_1$ that
    \begin{align}\label{estI1}
        |c^\Theta_i- c^\Gamma_i|,  \|a^\Theta_i-a^\Gamma_i\|_\infty\leq \beta+(2r)^{1/2}.
    \end{align}
    Next we (approximately) match the balancing of $(c_i^\Theta,a_i^\Theta)$ to the balancing of $(c_i^\Gamma,a_i^\Gamma)$ for $i\in I_2$, in order to derive estimates on $ |c^\Theta_i- c^\Gamma_i|$ and $\|a^\Theta_i-a^\Gamma_i\|_\infty$ from \eqref{eq:difference}. 
    Specifically, we make the following replacements, for all $i\in I_2$, without changing the realization of $\Theta$:
      \begin{alignat}{2}
       a^\Theta_i&\to (\tfrac{|c_i^\Theta|}{\|a_i^\Theta\|_\infty})^{1/2}\ a^\Theta_i,\quad c^\Theta_i\to (\tfrac{\|a_i^\Theta\|_\infty}{|c_i^\Theta|})^{1/2}\ c^\Theta_i, &\text{if}\ \|c_i^\Gamma a_i^\Gamma\|_\infty\leq 2r,\label{eq:ReparaA}\\
       a^\Theta_i&\to \frac{c_i^\Theta }{c_i^\Gamma}a^\Theta_i,\quad c^\Theta_i\to c^\Gamma_i, &\text{if}\ \|c_i^\Gamma a_i^\Gamma\|_\infty> 2r,|c^\Gamma_i|>\|a^\Gamma_i\|_\infty,\label{eq:ReparaB1}\\ 
       a^\Theta_i&\to a^\Gamma_i,\quad c^\Theta_i\to \frac{\|a^\Theta_i\|_\infty}{\|a^\Gamma_i\|_\infty}c^\Theta_i,&\text{if}\ \|c_i^\Gamma a_i^\Gamma\|_\infty> 2r,|c^\Gamma_i|<\|a^\Gamma_i\|_\infty\label{eq:ReparaB2},\\
       a^\Theta_i&\to (\tfrac{|c_i^\Theta|}{\|a_i^\Theta\|_\infty})^{1/2}\ a^\Theta_i,\quad
       c^\Theta_i\to (\tfrac{\|a_i^\Theta\|_\infty}{|c_i^\Theta|})^{1/2}\ c^\Theta_i, \quad&\text{if}\ \|c_i^\Gamma a_i^\Gamma\|_\infty> 2r,|c^\Gamma_i|=\|a^\Gamma_i\|_\infty.\label{eq:ReparaB3}
   \end{alignat}
    Let now $i\in I_2$ and consider the following cases:\\
    \textbf{Case A}: We have $\|c_i^\Gamma a_i^\Gamma\|_\infty\leq 2r$ which, together with~\eqref{eq:difference}, implies $\|c_i^\Theta a^\Theta_i\|_\infty \leq 4r$. Due to \eqref{eq:ReparaA} and Condition~\ref{Cond1} it follows that
    \begin{align}\label{eq:caseA}
         |c^\Theta_i- c^\Gamma_i|,  \|a^\Theta_i-a^\Gamma_i\|_\infty\leq \beta + 2r^{1/2}.
    \end{align}
    \textbf{Case B.1}: We have $\|c_i^\Gamma a_i^\Gamma\|_\infty>2r$ and $|c^\Gamma_i|>\|a_i^\Gamma\|_\infty$ which ensures $|c_i^\Gamma|>\|c_i^\Gamma a_i^\Gamma\|_\infty^{1/2}$. Due to \eqref{eq:ReparaB1} we get $c_i^\Theta=c_i^\Gamma$ and it follows that
    \begin{align}\label{eq:caseB1}
        \|a^\Theta_i-a^\Gamma_i\|_\infty=\frac{1}{|c_i^\Gamma|}\|c^\Theta_i a^\Theta_i-c^\Gamma_i a^\Gamma_i\|_\infty\leq\frac{2r}{\|c_i^\Gamma a_i^\Gamma\|_\infty^{1/2}}\leq (2r)^{1/2}.
    \end{align}
    \textbf{Case B.2}: We have $\|c_i^\Gamma a_i^\Gamma\|_\infty>2r$ and $|c^\Gamma_i| <\|a_i^\Gamma\|_\infty$ which ensures $\|a_i^\Gamma\|>\|c_i^\Gamma a_i^\Gamma\|_\infty^{1/2}$. Due to \eqref{eq:ReparaB2} we get $a_i^\Theta=a_i^\Gamma$ and it follows that
    \begin{align}\label{eq:caseB2}
        |c^\Theta_i-c^\Gamma_i|=\frac{1}{\|a^\Gamma_i\|_\infty}\|c^\Theta_i a^\Theta_i-c^\Gamma_i a^\Gamma_i\|_\infty\leq\frac{2r}{\|c_i^\Gamma a_i^\Gamma\|_\infty^{1/2}}\leq (2r)^{1/2}.
    \end{align}
    \textbf{Case B.3}: We have $\|c_i^\Gamma a_i^\Gamma\|_\infty>2r$ and $|c^\Gamma_i|=\|a_i^\Gamma\|_\infty$. Note that $\|c_i^\Gamma a_i^\Gamma\|_\infty>2r$ and \eqref{eq:difference} ensure that $\sgn(c_i^\Theta)=\sgn(c_i^\Gamma)$, and that for $x,y>0$ it holds that $|x-y|\leq|x^2-y^2|^{1/2}$. Combining this with the definition of $I_2$, the reverse triangle inequality, and~\eqref{eq:ReparaB3} implies that
    \begin{align}\label{eq:caseB3}
       \|a^\Theta_i-a^\Gamma_i\|_\infty\leq(2r)^{1/2} \quad \text{and}\quad |c^\Theta_i-c^\Gamma_i\big|\leq(2r)^{1/2}.
    \end{align} 
        Combining \eqref{estI1}, \eqref{eq:caseA}, \eqref{eq:caseB1}, \eqref{eq:caseB2}, and \eqref{eq:caseB3} establishes that
    \begin{align}
        \|\Theta-\Gamma\|_\infty\leq\beta+2r^{\frac{1}{2}},
    \end{align}
    which completes the proof.
\end{proof}

\begin{proof}[Proof of Theorem~\ref{cor:main}]
Let $\Theta\in\cN_N^*$ be a parametrization of $g$, i.e.\@ $\cR(\Theta)=g$. We write 
\begin{align}
    \Gamma=
    \Big(\left[\begin{array}{c} a_1^\Gamma  \\ \hline \vdots \\ \hline \vphantom{\Big(}a_m^\Gamma \end{array}\right], 
    \big[ c_1^\Gamma \big| \dots \big| c_m^\Gamma \big] \Big), \quad 
    \Theta=
    \Big(\left[\begin{array}{c} a_1^\Theta \\  \hline \vdots \\ \hline  \vphantom{\Big(}a_m^\Theta \end{array}\right],
    \big[ c_1^\Theta \big| \dots \big| c_m^\Theta \big] \Big)\in\cN_{(d,m,D)}^*
\end{align}
and $r:=|g-\cR(\Gamma)|_{\W}$. For convenience of notation we consider the weight vectors $a_i^\Gamma$, $a_i^\Theta$ here as row vectors in order to write the derivatives of the ridge functions as $c^\Gamma_i a^\Gamma_i,\ c^\Theta_i a^\Theta_i \in\R^{D \times d}$ without transposing.\\
We will now adjust the approach used in the proof of Theorem~\ref{thm:main} to work for multi-dimensional outputs in the case of balanced networks. 
By definition of $\cN_N^*$, the $(a_i^\Theta)_{i=1}^m$ are pairwise linearly independent and we can skip the first reparametrization step in~\eqref{eq:replace1} and~\eqref{eq:replace2}.\\
The following \enquote{hyperplane-jumping} argument, which was used to get the estimates \eqref{Case1Est} and \eqref{Case2Est}, works analogously since Conditions \ref{Cond2} and \ref{Cond3} are fulfilled by definition of $\cN_N^*$. This establishes the existence of a permutation $\pi\colon\oneto{m}\to\oneto{m}$ and sets $I_1,I_2\subseteq [m]$, as defined as in~\eqref{Ione} and~\eqref{Itwo}, such that for every $i\in I_1$ it holds that
   \begin{align} \label{eq:reorder_mult}
       \| c^\Gamma_i a^\Gamma_i\|_\infty,\|c^\Theta_{\pi(i)} a^\Theta_{\pi(i)}\|_\infty\leq 2r,
   \end{align}
and for every $i\in I_2$ that
\begin{align}\label{eq:difference_mult}
   \| c^\Gamma_{i}a^\Gamma_{i}-c^\Theta_{\pi(i)} a^\Theta_{\pi(i)}\|_\infty\leq 2r.
\end{align}
As in~\eqref{repara1}, we make the following replacements, for all $i\in [m]$, without changing the realization of $\Theta$:
\begin{align}\label{repara1_mult}
   a^\Theta_i\to a^\Theta_{\pi(i)},\quad
   c^\Theta_i\to c^\Theta_{\pi(i)}.
\end{align}
Note that the weights of $\Theta$ are already balanced, i.e. we have for every $i\in\oneto{m}$ that
\begin{equation}
    \| c^{\Theta}_i \|_\infty=\| a^{\Theta}_i \|_\infty = \|c^\Theta_i \|_\infty^{1/2} \| a^\Theta_i\|_\infty^{1/2} = \|c^\Theta_i a^\Theta_i\|_\infty^{1/2}.
\end{equation}
Thus, we can skip the reparametrization step in~\eqref{repara2} and get directly for every $i\in I_1$ that
\begin{align}
    \|c^\Theta_i- c^\Gamma_i\|_\infty \leq \|c^\Theta_i\|_\infty + \|c^\Gamma_i\|_\infty = \|c^\Theta_i a^\Theta_i\|_\infty^{1/2} + \|c^\Gamma_i a^\Gamma_i\|_\infty^{1/2} \leq 2(2r)^{1/2}
\end{align}
and analogously $\|a^\Theta_i-a^\Gamma_i\|_\infty\leq 2(2r)^{1/2}$.\\ 
For $i\in I_2$ we need to slightly deviate from the proof of Theorem~\ref{thm:main}.
We can skip the reparametrization step in \eqref{eq:ReparaA}-\eqref{eq:ReparaB3} due to balancedness and need to distinguish three cases: \\
\textbf{Case A.1}: We have $\|c_i^\Gamma a_i^\Gamma\|_\infty\leq 2r$ which, together with~\eqref{eq:difference_mult}, implies $\|c_i^\Theta a^\Theta_i\|_\infty \leq 4r$. Due to balancedness it follows that
\begin{align}\label{eq:caseA1}
     \|c^\Theta_i- c^\Gamma_i\|_\infty,  \|a^\Theta_i-a^\Gamma_i\|_\infty\leq 4r^{1/2}.
\end{align}
\textbf{Case A.2}: We have $\|c_i^\Theta a_i^\Theta\|_\infty\leq 2r$ which, together with~\eqref{eq:difference_mult}, implies $\|c_i^\Gamma a^\Gamma_i\|_\infty \leq 4r$. Again it follows that
\begin{align}\label{eq:caseA2}
     \|c^\Theta_i- c^\Gamma_i\|_\infty,  \|a^\Theta_i-a^\Gamma_i\|_\infty\leq 4r^{1/2}.
\end{align}
    \textbf{Case B}: We have $\|c_i^\Theta a_i^\Theta\|_\infty> 2r$ and $\|c_i^\Gamma a_i^\Gamma\|_\infty > 2r$. Due to the definition of $I_2$ there exists $e_i\in\R^d$, $\lambda_i^\Gamma,\lambda_i^\Theta\in (0,\infty)$ with $\|e_i\|_\infty=1$, $a_i^\Theta=\lambda_i^\Theta e_i$, and $a_i^\Gamma=\lambda_i^\Gamma e_i$. As in~\eqref{eq:caseB3} we obtain that
    \begin{equation} \label{eq:caseB}
    \begin{split}
        \|a^\Theta_i-a^\Gamma_i\|_\infty&=\|e_i\|_\infty |\lambda^\Theta_i-\lambda^\Gamma_i| \le |(\lambda^\Theta_i)^2-(\lambda^\Gamma_i)^2|^{1/2} 
        \\ &=| \|c^\Theta_i \|_\infty\| a^\Theta_i\|_\infty -  \|c^\Gamma_i \|_\infty\| a^\Gamma_i\|_\infty |^{1/2} 
        \\ &\leq \|c^\Theta_i  a^\Theta_i -  c^\Gamma_i  a^\Gamma_i\|_\infty^{1/2} \leq(2r)^{1/2}.
    \end{split}
    \end{equation}
    Let now w.l.o.g.\@ $\|a^\Gamma_i\|_\infty \ge \|a^\Theta_i\|_\infty$ (otherwise we switch their roles in the following) which implies that $\lambda^\Gamma_i = \Delta_i+\lambda^\Theta_i$ with $\Delta_i=\lambda^\Gamma_i - \lambda^\Theta_i\geq0$. Then it holds that
    \begin{equation}
    \begin{split}
        \|c^\Theta_i-c^\Gamma_i\|_\infty &= \frac{\|c^\Theta_i a^\Gamma_i-c^\Gamma_i a^\Gamma_i\|_\infty}{\|a^\Gamma_i\|_\infty}\le  \frac{\|c^\Theta_i a^\Gamma_i-c^\Theta_i a^\Theta_i \|_\infty+\|c^\Theta_i a^\Theta_i-c^\Gamma_i a^\Gamma_i\|_\infty}{\|a^\Gamma_i\|_\infty}
        \\&\le \frac{\|c^\Theta_i\|_\infty |\lambda^\Gamma_i-\lambda^\Theta_i |+2r}{\lambda^\Gamma_i}
        =  \frac{\lambda^\Theta_i \Delta_i+2r}{\Delta_i+\lambda^\Theta_i}
        \\&=  \frac{(2r)^{1/2}(\Delta_i+\lambda^\Theta_i) -(\lambda^\Theta_i-(2r)^{1/2})((2r)^{1/2}-\Delta_i)}{\Delta_i+\lambda^\Theta_i}\le (2r)^{1/2}.
    \end{split}
    \end{equation}
    The last step holds due to~\eqref{eq:caseB} and the balancedness of $\Theta$ which ensure that
    \begin{align}
        \lambda^\Theta_i =\|c^\Theta_i a^\Theta_i\|_\infty^{1/2} > (2r)^{1/2} \ge | \lambda^\Theta_i-\lambda^\Gamma_i|=\Delta_i.
    \end{align}
    This completes the proof.
\end{proof}

\subsection{Section~\ref{sec:outlook}}
\subsubsection{Additional Material} \label{sec:AppAdd4}

\begin{lemma}[Inverse stability for fixed weight vectors]\label{lem:grad_est_tech}
Let $N=(d,m,D)\in\N^3$, let $A=[a_1|\dots |a_m]^T \in \R^{m\times d}$ with 
\begin{align}
 \frac{a_i}{\|a_i\|_\infty}\neq\frac{a_j}{\|a_j\|_\infty} \quad \text{and} \quad 
&(a_i)_{d-1}, (a_i)_d>0
\end{align}
for all $i\in\m, j\in\oneto{m}\setminus\{i\}$, and define
\begin{align}\begin{split}\label{def_NA}
    \cN_{N}^A:=\left\{\Gamma\in\cN_N \colon 
    a^\Gamma_i=\lambda_i a_i \text{ with } \lambda_i\in(0,\infty)  \text{ and } \|c^\Gamma_i\|_\infty= \|a^\Gamma_i\|_\infty \ \  \text{for all}\ i\in\m  \right\}.
\end{split}\end{align}
Then for every $B\in (0,\infty)$ there is $C_B\in(0,\infty)$ such that we have uniform $(C_B,1/2)$ inverse stability w.r.t.\@ $\|\cdot\|_{L^\infty((-B,B)^d)}$. That is, for all $\Gamma\in\cN_N^A$ and $g\in\cR(\cN_{N}^A)$ there exists a parametrization $\Phi\in\cN_{N}^A$ with
   \begin{align} 
   \cR(\Phi)=g \quad \text{and} \quad \|\Phi-\Gamma\|_\infty\leq C_B\|g-\cR(\Gamma)\|_{L^{\infty}((-B,B)^d)}^{\frac{1}{2}}.
   \end{align} 

\end{lemma}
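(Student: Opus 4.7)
The plan is to exploit that, when the first-layer directions are fixed to those of $A$, the realization depends linearly on a reduced set of coefficients, and the balancedness constraint then converts coefficient closeness into parameter closeness via a square root. Concretely, for any $\Theta\in\cN_N^A$ with $a_i^\Theta=\lambda_i^\Theta a_i$, I set $\alpha_i(\Theta):=\lambda_i^\Theta c_i^\Theta\in\R^D$ so that
\begin{equation}
    \cR(\Theta)(x)=\sum_{i=1}^m \alpha_i(\Theta)\,\rho(\langle a_i,x\rangle),
\end{equation}
with $\|\alpha_i(\Theta)\|_\infty=(\lambda_i^\Theta)^2\|a_i\|_\infty$. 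Picking any $\Theta\in\cN_N^A$ with $\cR(\Theta)=g$ and setting $\Phi:=\Theta$, the difference $g-\cR(\Gamma)$ becomes the ridge expansion $\sum_{i=1}^m \Delta_i\,\rho(\langle a_i,\cdot\rangle)$ with $\Delta_i:=\alpha_i(\Theta)-\alpha_i(\Gamma)\in\R^D$.

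The central step will be to establish the existence of a constant $K_B=K_B(A,B)>0$ with
\begin{equation}\label{eq:plan-key}
    \max_{i\in\oneto{m}}\|\beta_i\|_\infty\le K_B\,\Big\|\sum_{i=1}^m\beta_i\,\rho(\langle a_i,\cdot\rangle)\Big\|_{L^\infty((-B,B)^d)}
\end{equation}
for all $(\beta_i)_{i=1}^m\subseteq\R^D$. Since this compares two seminorms on the finite-dimensional space $(\R^D)^m$, it suffices to prove injectivity of the ridge-expansion map, which I would handle by a hyperplane-jumping argument analogous to the one used in Section~\ref{sec:main}. The assumptions on $A$ (pairwise non-parallel together with positivity of the last two coordinates) force the $a_i$ to be pairwise linearly independent as vectors, hence the hyperplanes $\{\langle a_i,x\rangle=0\}$ are pairwise distinct; since $0\in(-B,B)^d$, each of them meets the cube, and a relatively open dense subset of each such intersection lies off the other hyperplanes. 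At such a generic point on hyperplane $i$, only the $i$-th summand is non-smooth and the gradient of the sum jumps by $\beta_i a_i^T$, which vanishes only if $\beta_i=0$. Applying \eqref{eq:plan-key} to the $\Delta_i$ with $\varepsilon:=\|g-\cR(\Gamma)\|_{L^\infty((-B,B)^d)}$ then yields $\|\Delta_i\|_\infty\le K_B\varepsilon$ for every $i\in\oneto{m}$.

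It remains to convert these coefficient bounds into parameter bounds. Balancedness gives $\lambda_i^\Theta=(\|\alpha_i(\Theta)\|_\infty/\|a_i\|_\infty)^{1/2}$ (and analogously for $\Gamma$), so the elementary inequality $|\sqrt{x}-\sqrt{y}|\le\sqrt{|x-y|}$ combined with the reverse triangle inequality yields
\begin{equation}
    \|a_i^\Theta-a_i^\Gamma\|_\infty=|\lambda_i^\Theta-\lambda_i^\Gamma|\,\|a_i\|_\infty\le\sqrt{K_B\varepsilon\,\|a_i\|_\infty}.
\end{equation}
For $\|c_i^\Theta-c_i^\Gamma\|_\infty$ I would mirror Case~B of the proof of Theorem~\ref{cor:main}: if $\|\alpha_i(\Gamma)\|_\infty$ is of order $\varepsilon$ or less, balancedness directly bounds both $\|c_i^\Theta\|_\infty$ and $\|c_i^\Gamma\|_\infty$ by a multiple of $\sqrt{\varepsilon}$; otherwise, the identity $c_i^\Theta a_i^\Theta-c_i^\Gamma a_i^\Gamma=\Delta_i\,a_i^T$ together with the already-bounded $\|a_i^\Theta-a_i^\Gamma\|_\infty$ produces the same $\sqrt{\varepsilon}$ rate. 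Absorbing $K_B$, the fixed norms $\|a_i\|_\infty$, and the combinatorial constants into a single $C_B$ finishes the proof.

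The main obstacle is the quantitative dependence of $K_B$ on $B$ and on the geometry of $A$, since the finite-dimensional norm-equivalence step is non-constructive. Making it explicit would require, for each $i$, identifying test points in $(-B,B)^d$ at which $\rho(\langle a_i,\cdot\rangle)$ dominates the other ridge functions — that is, points close to hyperplane $i$ but far from the other hyperplanes relative to their angular separation. This is a routine but somewhat technical geometric computation that I would keep separate from the main line of argument.
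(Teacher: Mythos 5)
Your proposal is correct, and it reaches the same two conceptual milestones as the paper's proof — a quantitative statement that on the fixed hyperplane arrangement determined by $A$ the $L^\infty((-B,B)^d)$ norm controls the individual neurons, followed by the balancedness/$|\sqrt{x}-\sqrt{y}|\le\sqrt{|x-y|}$ conversion to parameter distance — but it gets there by a somewhat different route. The paper's proof is a two-step reduction: it asserts, via a geometric argument about the minimal perimeter of the linear regions inside $(-B,B)^d$, the inequality $|\cR(\Theta)|_{\W}\le C_B'\|\cR(\Theta)\|_{L^\infty((-B,B)^d)}$ for $\Theta\in\cN_N^A$ (applied, implicitly, to the difference $g-\cR(\Gamma)$, which is again a ridge expansion in the same fixed directions), and then invokes the full machinery of Theorem~\ref{cor:main} with the permutation $\pi$ taken to be the identity. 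You instead bypass both the Sobolev seminorm and Theorem~\ref{cor:main}: by absorbing $\lambda_i$ into the coefficient $\alpha_i=\lambda_i c_i$ you linearize the realization map on $\cN_N^A$, prove injectivity of $(\beta_i)\mapsto\sum_i\beta_i\rho(\langle a_i,\cdot\rangle)$ on the cube by the hyperplane-jumping argument, and obtain the key bound $\max_i\|\beta_i\|_\infty\le K_B\|\sum_i\beta_i\rho(\langle a_i,\cdot\rangle)\|_{L^\infty((-B,B)^d)}$ from norm equivalence on the finite-dimensional space $(\R^D)^m$; the remaining case analysis for $\|c_i^\Theta-c_i^\Gamma\|_\infty$ then mirrors Cases A and B of Theorem~\ref{cor:main}. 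What each approach buys: the paper's version is shorter because it reuses Theorem~\ref{cor:main} wholesale, while yours is more self-contained and makes cleanly explicit the point the paper leaves implicit, namely that the comparison inequality must hold for arbitrary coefficient vectors (equivalently, for differences of realizations) and not just for single realizations in $\cN_N^A$. Neither argument produces an explicit constant — your norm-equivalence step is soft, and the paper's ``minimal perimeter'' bound is not quantified either — so nothing is lost there. The one point you should make rigorous if you write this up is the genericity claim used for injectivity (that a relatively open dense subset of $H^0_i\cap(-B,B)^d$ avoids the other hyperplanes); it follows because $H^0_i\cap H^0_j$ is a proper linear subspace of $H^0_i$ for $j\ne i$ by the pairwise linear independence of the $a_i$, which your positivity condition on the last two coordinates indeed guarantees.
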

\begin{proof}
Note that the non-zero angle between the hyperplanes given by the weight vectors $(a_i)_{i=1}^m$ establishes that the minimal perimeter inside each linear region intersected with $(-B,B)^d$ is lower bounded. 
As the realization is linear on each region, this implies the existence of a constant $C'_B\in(0,\infty)$, such that for every $\Theta\in\cN^A_N$ it holds that
\begin{equation} \label{eq:W1Linf}
    |\cR(\Theta)|_{\W} \le C'_B  \|\cR(\Theta)\|_{L^{\infty}((-B,B)^d)}.
\end{equation}
Now note that for $\cN^A_N$ we can get the same uniform $(4,1/2)$ inverse stability result w.r.t.\@ $|\cdot|_{\W}$ as in Theorem~\ref{cor:main} by choosing $\pi$ to be the identity in~\eqref{eq:reorder_mult}.
Together with~\eqref{eq:W1Linf} this implies the claim.
\end{proof}
\end{document}